\theoremstyle{thmstyleone}%
\newtheorem{theorem}{Theorem}
\newtheorem{corollary}[theorem]{Corollary}
\theoremstyle{thmstyletwo}%
\theoremstyle{thmstylethree}%
\newtheorem{definition}{Definition}%
\newtheorem{lemma}{Lemma}[section]
\newcommand{\norm}[1]{\left\lVert#1\right\rVert}
\newcommand{\abs}[1]{\left\vert#1\right\vert}
\newcommand{\E}{\mathbb{E}}
\newcommand{\R}{\mathbb{R}}
\newcommand{\bigOmega}{\Omega}
\newcommand{\xbf}{\mathbf{x}}
\newcommand{\Kbf}{\mathbf{K}}
\newcommand{\Ibf}{\mathbf{I}}
\newcommand{\hbf}{\mathbf{h}}
\newcommand{\phibf}{\bm{\phi}}
\newcommand{\Zbf}{\mathbf{Z}}
\newcommand{\Jbf}{\mathbf{J}}
\newcommand{\bbf}{\mathbf{b}}
\newcommand{\thetabf}{\bm{\theta}}
\newcommand{\zbf}{\mathbf{z}}
\newcommand{\gbf}{\mathbf{g}}
\begin{document}

\title[]{Quantum-Enhanced Neural Contextual Bandit Algorithms}


\author*[1]{\fnm{Yuqi} \sur{Huang}}\email{e0727232@u.nus.edu}

\author[1,2]{\fnm{Vincent Y. F} \sur{Tan}}\email{vtan@nus.edu.sg}

\author[3]{\fnm{Sharu Theresa} \sur{Jose}}\email{s.t.jose@bham.ac.uk}

\affil*[1]{\orgdiv{Department of Mathematics}, \orgname{National University of Singapore}, \orgaddress{\city{Singapore}, \postcode{119077}, \country{Singapore}}}

\affil[2]{\orgdiv{Department of Electrical and Computer Engineering}, \orgname{National University of Singapore}, \orgaddress{\city{Singapore}, \postcode{117583},  \country{Singapore}}}

\affil[3]{\orgdiv{School of Computer Science}, \orgname{University of Birmingham}, \orgaddress{ \city{Birmingham}, \postcode{B15 2TT}, \country{United Kingdom}}}


\abstract{Stochastic contextual bandits  are fundamental for sequential decision-making but pose significant challenges for existing neural network-based algorithms, particularly when scaling to quantum neural networks (QNNs) due to issues such as massive over-parameterization, computational instability, and the barren plateau phenomenon. This paper introduces the \emph{Quantum Neural Tangent Kernel-Upper Confidence Bound} (QNTK-UCB) algorithm, a novel algorithm that leverages the Quantum Neural Tangent Kernel (QNTK) to address these limitations.

By freezing the QNN at a random initialization and utilizing its static QNTK as a kernel for ridge regression, QNTK-UCB bypasses the unstable training dynamics inherent in explicit parameterized quantum circuit training while fully exploiting the unique quantum inductive bias. For a time horizon $T$ and $K$ actions, our theoretical analysis reveals a significantly improved parameter scaling of $\Omega((TK)^3)$ for QNTK-UCB, a substantial reduction compared to $\Omega((TK)^8)$  required by classical NeuralUCB algorithms for similar regret guarantees. Empirical evaluations on non-linear synthetic benchmarks and quantum-native variational quantum eigensolver  tasks demonstrate QNTK-UCB's superior sample efficiency in low-data regimes. This work highlights how the inherent properties of QNTK provide implicit regularization and a sharper spectral decay, paving the way for achieving ``quantum advantage'' in online learning.}

\keywords{Quantum Neural Tangent Kernel (QNTK),
Quantum Neural Networks (QNNs),
Upper Confidence Bound (UCB) algorithm,
Sample efficiency}



\maketitle

\section{Introduction}\label{sec1}
Stochastic contextual bandits (SCBs) have been extensively studied over the past few decades due to their wide-ranging applications in areas such as clinical trials \cite{tewari2017ads,varatharajah2022contextual}, e-commerce recommendation systems \cite{li2010contextual,tang2014ensemble}, online advertising, and personalized media delivery \cite{bouneffouf2012contextual}. SCBs provide a canonical framework for online sequential decision-making, in which a learner repeatedly selects actions based on observed contextual information. At each time step, each action (e.g., trial drug) comes with contextual features that depend on side-information about the environment (e.g., age, sex, tumor biomarkers of the patient).  The learner observes these context-dependent features, selects an action, and receives a stochastic reward (e.g., observed treatment outcome). The objective of the learner is to design a decision policy that maximizes the expected cumulative reward over a finite time horizon $T$, or equivalently, minimizes the cumulative regret relative to a baseline policy. Achieving this objective requires effectively balancing exploration of alternative actions with exploitation of the currently best-performing actions.

In the classical literature, SCBs are often analyzed via linear reward models, where the unknown mean reward function is assumed to be a linear function of the context feature vector. This has led to the development of several powerful algorithms such as linear contextual UCB~\cite{linucb2011,chu2011contextual} and linear contextual Thompson sampling~\cite{agrawal2013thompson, oh2019thompson}. While linear reward models are theoretically convenient, they  often fail to capture the highly non-linear dependencies encountered in practical scenarios. This limitation has led to the exploration of several non-linear models, including generalized linear models~\cite{filippi2010parametric, li2017provably}, kernel models~\cite{valko2013finite}, and Gaussian processes~\cite{Srinivas_2012}, where the reward function is assumed to reside within a reproducing kernel Hilbert space (RKHS). Although these methods are powerful, their effectiveness largely depends on the compatibility of their inductive bias with the true underlying reward function. 

To address this challenge, classical neural networks (NNs) have been recently introduced to the bandit setting \cite{riquelme2018deep, zahavy2019deep, neuralucb, zhang2020neural}, leveraging their immense representational power to approximate complex mean reward functions. The resulting neural contextual bandit algorithms typically operate in the ``Neural Tangent Kernel" (NTK) regime, where the NN is sufficiently over-parameterized such that its training dynamics are linearized. Despite the empirical success, classical neural bandits face significant challenges: they require massive over-parameterization (with the number of parameters scaling as $\bigOmega((TK)^8)$, 
where $K$ denotes the number of actions) to satisfy convergence guarantees, and their computational cost in the online setting remains a critical bottleneck due to the need for frequent inversion of a dynamic design matrix.

Recently, quantum neural networks (QNNs), or parameterized quantum circuits (PQCs), have emerged as a powerful machine learning paradigm. Leveraging the principles of superposition and entanglement, QNNs   offer representational advantages over classical NNs with a comparable number of parameters
\cite{schuld2020circuit,du2020expressive, abbas2021power}. On the one hand, QNNs can embed classical contextual data into an exponentially large feature Hilbert space via complex quantum feature maps, offering a distinct ``quantum inductive bias" that may represent the reward functions more efficiently \cite{schuld2021effect}.  On the other hand,   recent works suggest that for data arising from inherently quantum physical processes---such as in finding the ground state of a Hamiltonian or classifying quantum phases---classical models may be inefficient \cite{kubler2021inductive, uvarov2020machine}. 

Motivated by these advantages, this work proposes a new class of quantum-enhanced neural contextual bandit algorithms that leverage QNN-based reward models. However, transitioning from classical to quantum neural bandits presents significant technical challenges. First, deep QNNs are plagued by the ``barren plateau" phenomenon \cite{mcclean2018barren}, where network gradients vanish exponentially with the number of qubits $m$. This necessitates an exponential number of measurements to estimate gradients accurately, thereby nullifying potential quantum advantages and rendering gradient-descent training unstable. Second, online training of QNN-based bandits requires computing dynamic gradient feature maps that evolve at each iteration as weights are updated. This forces the re-calculation and inversion of the design matrix at every step, leading to prohibitive computational costs for NISQ hardware.


A potential approach to overcoming barren plateaus is to restrict the architecture to shallow-depth QNNs, where the number of layers scales at most logarithmically with the number of qubits \cite{napp2022quantifying}. Interestingly, in such barren plateau-free regimes, recent results show that the training dynamics of the QNN are governed by a fixed analytic kernel determined by the circuit architecture at initialization \cite{abedi2023quantum,Girardi2025}, known as the Quantum Neural Tangent Kernel (QNTK). This represents an extension of classical NTK theory to over-parameterized QNNs. Importantly, the QNTK framework is applicable to  architectures whose depths can scale with the number of qubits, facilitating the use of circuits that are classically hard to simulate and thus yielding potential quantum advantage (see \cite[Section 2.5]{Girardi2025} for examples of such circuits). 

Motivated by these results and the challenges of explicitly training deep QNNs in a bandit setting, we propose a kernelized approach leveraging the QNTK as a static kernel for ridge regression. This strategy allows us to circumvent the non-convex optimization landscape of variational circuits while retaining the unique inductive bias inherent to the quantum feature map. Crucially, we demonstrate that the quantum feature space allows for more efficient linearization than its classical counterpart.

Our primary contributions are as follows:  
\begin{itemize}
\item  We introduce QNTK-UCB, the first contextual bandit algorithm that utilizes the empirical QNTK for reward estimation. This framework allows the learner to exploit quantum expressive power without the instabilities of explicit PQC training.
    \item We provide a comprehensive regret analysis of QNTK-UCB in terms of quantum effective dimension. A key finding of our work is that QNTK-UCB requires  significantly lower parameter scaling, $\tilde{\Omega}((TK)^3)$, to achieve the same regret bounds that require $\tilde{\Omega}((TK)^8)$ parameters in classical NeuralUCB \cite{neuralucb}.
    \item Through a series of experiments on non-linear synthetic benchmarks and quantum initial state recommendation for Variational Quantum Eigensolver (VQE) tasks, we demonstrate that QNTK-UCB exhibits superior sample efficiency in low-data regimes, providing a clear path toward ``quantum advantage" in online learning.
\end{itemize}

\textbf{Related Works:} Our work sits at the intersection of quantum machine learning and online decision-making, departing from several established lines of research in the field of quantum bandits.

A significant body of literature \cite{wan2023quantum, dai2023quantum, hikima2024quantum, siam2025quantum} proposes quantum algorithms for classical multi-armed and contextual bandits. 
These works typically assume the existence of a quantum reward oracle to achieve (quadratic) speedups in query complexity by resorting to quantum algorithms such as quantum Monte Carlo or amplitude amplification. However, the practical utility of these approaches is often hindered by the ``input bottleneck" phenomenon, namely that the computational cost of encoding classical reward data into a quantum oracle can be prohibitive, potentially neutralizing any algorithmic speedup. In contrast, our work does not assume a quantum oracle; instead, we utilize quantum circuits as a function approximation tool for classical data.

Another line of works \cite{lumbreras2022multi, brahmachari2024quantum} formulates the learning of quantum state properties, such as shadow tomography or Hamiltonian estimation, as a stochastic quantum bandit problem. However, these often fall under classical linear contextual frameworks. While QNTK-UCB is capable of addressing such tasks, it is designed as a general-purpose learner for both classical and quantum-native reward functions. 

\section{Background Setup and Quantum Neural Networks}\label{sec2}

\subsection{Problem Setting: Contextual Bandits}
We consider the stochastic $K$-armed contextual bandit problem with a finite horizon $T \in \mathbb{N}$. 
At each round $t \in [T] := \{1, \dots, T\}$, the agent observes a set of {\em context vectors} $\mathcal{X}_t = \{\xbf_{t,a}: a \in [K]\}$, where $\xbf_{t,a} \in \mathcal X := \cup_{t\in \mathbb{N}}\mathcal{X}_t \subset \mathbb R^d$ denotes the $d$-dimensional feature vector associated with arm $a$.
The agent selects an arm $a_t \in [K]$ and observes a noisy scalar reward $r_{t, a_t}$. We assume that the reward is generated as
\begin{align}
    r_{t,a_t}=h(\xbf_{t,a_t})+ \xi_t, \label{eq:reward}
\end{align} where $h: \mathbb{R}^d \to [0, 1]$ is an \textit{unknown, $[0,1]$-bounded, mean reward function}  and   $\xi_t$ is $\nu$-sub-Gaussian noise conditioned on the history  $\mathcal{H}_{t-1} = \{(\xbf_{s, a_s}, r_{s, a_s})\}_{s=1}^{t-1}$  up to and including time $t-1$, i.e., it  satisfies $\mathbb{E}[\xi_t| \mathcal{H}_{t-1}]=0$ and $\mathbb{E}[\exp(s\xi_t)|\mathcal{H}_{t-1}]\le \exp(\nu^2s^2/2)$ for all $s\in\R$. We note that the assumption of the mean reward function $h(\cdot)$ being bounded is standard in the bandit literature, and is satisfied under standard boundedness assumptions on contexts and model class (e.g., bounded contexts/parameters for linear models and bounded RKHS norm for kernel models). 
For the purpose of kernel analysis, we denote the collection of all contexts across the horizon as a ``vectorized'' dataset $\mathcal{X}_{1:TK} := \{\xbf_{t,a}\}_{t \in [T], a \in [K]}$, which may be indexed as $\{\xbf^i\}_{i=1}^{TK}$.

Let $a_t^* \in \arg\max_{a \in [K]} h(\xbf_{t,a})$ denote any optimal arm that maximizes the mean reward at round $t$. The goal of the agent is to minimize the \textit{expected cumulative regret}, 
\begin{align}\bar{R}_T:= \sum_{t=1}^T \mathbb{E}\left[ h(\xbf_{t, a_t^*}) - h(\xbf_{t, a_t}) \right], \label{eq:cum_regret} \end{align}
defined as the cumulative difference between the optimal expected reward and the expected reward of the selected arm accumulated over the horizon $T$.

\subsection{Quantum Neural Networks} \label{sec:QNN}
In this section, we explain the structure of QNNs under consideration.  Specifically, we follow the general QNN framework considered in \cite{Girardi2025}, which guarantees convergence (in distribution) of the function described by QNN to a Gaussian process in the infinite-{\em width} (number of qubits) limit. 

The QNN acts on a system of $m$ qubits with circuit depth $L \in\mathbb{N}$. In particular, we allow the number of layers $L(m)$ to vary with $m$.
The total unitary operation $U(\bm{\theta}, x)$ acting on the initial state $|0\rangle^{\otimes m}$ is composed of a sequence of $L$ layers: \begin{align}U(\boldsymbol{\theta}, \xbf) = U_L(\boldsymbol{\theta}_L, \xbf) \dots U_1(\boldsymbol{\theta}_1, \xbf).\label{eq:unitary_perlayer}\end{align}
Each layer $l \in [L]$ is represented by a unitary consisting of a parameterized block and a fixed block:
$$U_l(\boldsymbol{\theta}_l, \xbf) = W_l(\boldsymbol{\theta}_l) V_l(\xbf),$$ where $W_l(\boldsymbol{\theta}_l)$ contains trainable single-qubit rotations (parameterized by $\boldsymbol{\theta}_l$) acting on each qubit, and $V_l(\xbf)$ consists of fixed entangling gates (e.g., CNOTs) and, optionally, data-encoding gates. 
For QNNs with fixed number of layers $L$ (i.e., $L$ does not vary with $m$), it can be easily seen that $\thetabf \in \mathbb{R}^p$, with total number of parameters $p \approx Lm$.

The quantum circuit described above defines a reward model $f(\xbf;\thetabf)$ as follows: The total unitary operation $U(\thetabf,\xbf)$ acts on an initial quantum state $\vert 0^m \rangle$ to yield an output quantum state $\vert \psi(\thetabf,\xbf)\rangle= U(\thetabf,\xbf)\vert 0^m \rangle$. This output state is then measured using an observable $\mathcal{O}$. Following the framework outlined in \cite{Girardi2025}, we define $\mathcal{O}$ as a sum of local, single-qubit observables, expressed as:
\[
\mathcal{O} = \sum_{k=1}^m \mathcal{O}_k, \quad \text{where } \text{Tr}(\mathcal{O}_k) = 0,  \hspace{0.1cm}\mbox{for} \hspace{0.1cm} k=1,\hdots,m.
\]
Furthermore, the eigenspectrum of each observable $\mathcal{O}_k$ is confined to the set $\{-1, +1\}$, meaning that each eigenvalue of  $\mathcal{O}_k$  can be either be $+1$ or $-1$.
The output model $f(\xbf; {\thetabf})$ is  then  defined as the expected value of the global observable $\mathcal{O}$ with respect to the output state $\vert \psi(\thetabf,\xbf)\rangle$ up to a normalization constant $N(m)$ as
\begin{equation}\label{eq:qnn-model}
f(\xbf; {\thetabf}) = \frac{1}{N(m)} \sum_{k=1}^m f_k( \xbf;{\thetabf}), \quad \text{where} \quad f_k(\xbf;{\thetabf}) = \langle 0^m | U^\dagger({\thetabf}, \xbf) \mathcal{O}_k U({\thetabf}, \xbf) | 0^m \rangle.
\end{equation} 
Here, $N(m)$ is a normalization factor determined by the covariance function of the QNN model at initialization, i.e., when the parameters $\thetabf$ are randomly chosen at the start  before training. This normalization is included in the model definition to ensure that $f(\xbf;\thetabf)$ converges to a non-trivial Gaussian process as $m \to \infty$. Concretely, we make the following assumption \cite{Girardi2025}:

\begin{restatable}{assumption}{fcvgtogp} \label{fcvgtogp}
The distribution of parameters $\bm{\theta}$, the architecture of the quantum circuit and the
normalization $N(m)$ chosen are such that $$
\mathbb E[f_k(\xbf;\bm{\theta})] = 0  \quad\forall\, \xbf \in \mathcal{X},
$$ 
and
$$
\lim_{m\rightarrow \infty} \sup_{\xbf,\xbf' \in \mathcal X} \left| \mathbb E[f_k(\xbf;\bm{\theta})f_k(\xbf';\bm{\theta})] - \mathcal K(\xbf,\xbf') \right| = 0,
$$
where $\mathcal K:\mathcal{X} \times \mathcal{X} \rightarrow \mathbb R$ is an arbitrary bivariate function from the feature space to the real
numbers with strictly positive diagonal elements, i.e., $\mathcal K(\xbf,\xbf)>0$ for all  $\xbf \in \mathcal X$. 
\end{restatable} 

 From the above assumption, $N(m)$ is chosen so that the limit $m \rightarrow \infty$ yields a finite nontrivial covariance
function. Note that the value of $N(m)$ depends on the specific QNN architecture. For instance, the QNN architecture in Fig.~\ref{fig:circuit} has  $N(m)=\sqrt{m}$ \cite{abedi2023quantum}.
\begin{figure*}[t!] 
  \centering 
    \includegraphics[width=0.9\linewidth]{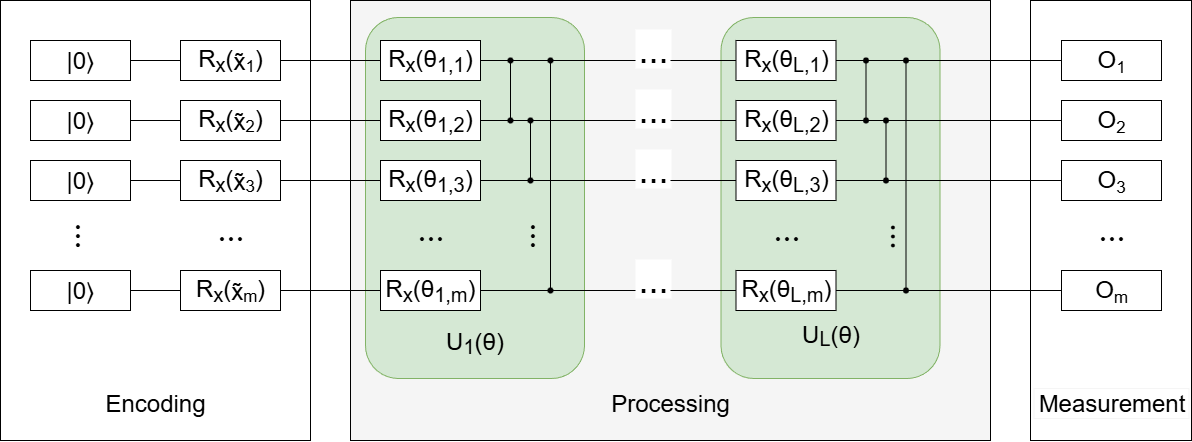}
    \caption{An example of a circuit structure}
  \label{fig:circuit}
\end{figure*}

In this work, we model the unknown reward function $h(\xbf)$ via the function $f(\xbf; {\thetabf})$, defined using QNN architectures satisfying Assumption~\ref{fcvgtogp}. We denote the gradient of the model function with respect to the parameter vector as $\gbf(\xbf; {\thetabf}) = \nabla_{\thetabf} f(\xbf; {\thetabf})$.

\subsection{Quantum Neural Tangent Kernel Theory}\label{sec:qntk}
The theoretical analysis of classical neural bandit algorithms, such as NeuralUCB, is built upon the neural tangent kernel (NTK) \cite{neuralucb, ntk} theory. This theory posits that in the ``lazy training" regime of infinitely wide neural networks, the network weights stay close to their initialization, and the optimization dynamics can be approximated by a linear model of  the network gradients. In this regime, the NN effectively operates as a linear model of the high-dimensional feature map, $\phibf(\xbf) = \gbf(\xbf; {\thetabf}_0)$, defined by the network's gradient at initialization $\thetabf_0$. 

The NTK theory has been recently extended to QNN models \cite{abedi2023quantum, Girardi2025}. In particular, Girardi and De Palma~\citep{Girardi2025} show that   under certain assumptions,  QNN functions converge to Gaussian processes in the limit as $m \rightarrow \infty$, and their dynamics are governed by the quantum neural tangent kernel (QNTK). We define the \emph{empirical QNTK} at a random initialization ${\thetabf}_0$ as:\begin{equation}\label{eq:qntk-empirical}\hat{\Kbf}_{{\thetabf}_0}(\xbf, \xbf') = \frac{1}{N_K(m)} \gbf(\xbf; {\thetabf}_0)^\top \gbf(\xbf'; {\thetabf}_0)= \frac{1}{N_K(m)} \langle \nabla_{\bm{\theta}} f(\xbf; {\thetabf}_0), \nabla_{\bm{\theta}} f(\xbf'; {\thetabf}_0) \rangle,\end{equation} 
where $\gbf(\xbf; {\thetabf}_0) = \nabla_{\thetabf} f(\xbf; {\thetabf}_0)$ is the gradient of the QNN model \eqref{eq:qnn-model} at initialization, and $N_K(m)$ is a width-dependent normalization factor chosen to ensure the kernel converges to a non-trivial limit as $m \to \infty$. 

The associated \emph{analytic QNTK} is the expectation of the empirical kernel over the random initialization of network parameters, i.e.,  \begin{align}\Kbf^{(m)}(\xbf, \xbf') = \mathbb{E}_{{\thetabf}_0}[\hat{\Kbf}_{{\thetabf}_0}(\xbf, \xbf')], \end{align} where the elements of the random vector $\bm{\theta}_0$ are  independent and uniform random variables in $[0,\pi]$. Note that the analytic QNTK  $\Kbf^{(m)}(\xbf, \xbf')$ depends on the QNN architecture. Its spectral properties, and thus the learning inductive bias, are governed by the number of qubits $m$, the circuit depth $L$, the connectivity of entangling gates, and the data encoding strategy $V(\xbf)$.  For brevity in what follows, we remove the dependence of $\Kbf^{(m)}(\xbf, \xbf')$ on $m$ and write it as $\Kbf(\xbf, \xbf')$. Similarly, when it is clear from the context, we remove the dependence of $\hat{\Kbf}_{\thetabf_0}$ on $\thetabf_0$ and write it as $\hat{\Kbf}$.
Furthermore, we make the following assumption \cite{Girardi2025}:

\begin{restatable}{assumption}{cvgtolimqntk}\label{ass:cvgtolimqntk}
    There is a choice of $N_K(m)$ that ensures there exists a function $\bar{\Kbf}(\xbf,\xbf')$ such that
    \begin{align}
        \lim_{m \rightarrow \infty} \sup_{\xbf,\xbf' \in \mathcal{X}} | \Kbf(\xbf,\xbf')-\bar{\Kbf}(\xbf,\xbf')|=0.
    \end{align}
    In other words, the analytic QNTK converges to a limiting QNTK as $m \rightarrow \infty$; this assumption can be satisfied by a wide range of practically relevant quantum circuit architectures \cite{Girardi2025}. 
\end{restatable}

Additionally, our quantum neural contextual bandit algorithms  rely on the following key property of the empirical QNTK: The empirical QNTK $\hat{\Kbf}_{{\thetabf}_0}$ converges to the analytic mean $\Kbf$ as the number of qubits $m$  grows. This convergence is guaranteed if the QNN architecture satisfies certain structural conditions.  The key QNN properties that influence the convergence are the \emph{past light cone} $\mathcal{N}_k$, defined as the set of parameters $\{{\thetabf}_i\}$ that can influence the local observable $f_k$, with $|\mathcal{N}| = \max_k |\mathcal{N}_k|$ and the \emph{future light cone} $\mathcal{M}_i$, the set of observables $\{f_k\}$ that a parameter ${\thetabf}_i$ can influence, with $|\mathcal{M}| = \max_i |\mathcal{M}_i|$. Note that $L$, $|\mathcal{M}|$, and $|\mathcal{N}|$  may depend on the width $m$. Formally, the structural conditions of the architecture and the convergence property of the empirical QNTK can be stated as follows:

\begin{restatable}{assumption}{circuitstruct}\label{circuitstruct}
The QNN architecture with $m$ qubits satisfies the following:
\begin{align} \lim_{m\rightarrow \infty} \frac{Lm |\mathcal{M}|^4|\mathcal{N}|^2}{N(m)^4}=0, \label{eq:condition} \end{align} 
\end{restatable}

\begin{theorem}[Theorem 4.13 of \cite{Girardi2025}] \label{thm:cvgQNTK}
Under Assumption \ref{circuitstruct}, when the QNN is randomly initialized, i.e., the parameters $\thetabf$ are
independent random variables, the empirical QNTK converges in probability to the analytic
QNTK as $m \rightarrow \infty$. In particular,  there exists a constant $c>0$ such that, for any $\xbf,\xbf' \in \mathcal{X}$, we
have \begin{align}
  \mathbb{P} \left( \Bigl|\hat{\Kbf}_{\thetabf_0}(\xbf,\xbf')- {\Kbf}(\xbf,\xbf')\Bigr| > \varepsilon \right) \leq \exp \left(-c \varepsilon^2 \frac{N_K(m)^2N(m)^4}{Lm |\mathcal{M}|^4|\mathcal{N}|^2}\right).
\end{align}   

\end{theorem}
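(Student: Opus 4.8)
The plan is to realize the empirical QNTK as a normalized sum over the circuit parameters and then apply a bounded-difference (McDiarmid) concentration inequality, with the light-cone quantities $|\mathcal{M}|$ and $|\mathcal{N}|$ controlling the bounded-difference coefficients. First I would expand the gradient using the local decomposition $f(\xbf;\thetabf)=\frac{1}{N(m)}\sum_k f_k(\xbf;\thetabf)$ together with the crucial sparsity fact that $\partial_{\theta_i}f_k=0$ unless $k$ lies in the future light cone $\mathcal{M}_i$ of $\theta_i$ (equivalently $i\in\mathcal{N}_k$). This yields
\[\hat{\Kbf}_{\thetabf_0}(\xbf,\xbf')=\frac{1}{N_K(m)N(m)^2}\sum_i\Bigl(\sum_{k\in\mathcal{M}_i}\partial_{\theta_i}f_k(\xbf)\Bigr)\Bigl(\sum_{k'\in\mathcal{M}_i}\partial_{\theta_i}f_{k'}(\xbf')\Bigr),\]
a sum of $p\approx Lm$ terms, each a product of two light-cone-restricted sums, in the independent random variables $\{\theta_i\}$. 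Since $\Kbf=\mathbb{E}_{\thetabf_0}[\hat{\Kbf}_{\thetabf_0}]$ by definition, the quantity of interest is a centered function of independent coordinates, exactly the setting for McDiarmid's inequality.

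The core of the argument is bounding the change in $\hat{\Kbf}_{\thetabf_0}$ when a single parameter $\theta_j$ is replaced by an independent copy. Two boundedness facts are needed: each $f_k$ is the expectation of an observable with eigenvalues in $\{-1,+1\}$, so $|f_k|\le 1$; and since each $\theta_i$ drives a single-qubit rotation with a bounded generator, every partial derivative $\partial_{\theta_i}f_k$ is uniformly bounded over $\xbf\in\mathcal{X}$ and over the support of $\thetabf_0$. Consequently, resampling $\theta_j$ changes any such factor by $O(1)$, and only the terms with $k\in\mathcal{M}_i\cap\mathcal{M}_j$ respond. I would then bound the total change by directly counting the admissible triples $(i,k,k')$ rather than bounding each affected summand and multiplying by the number of affected indices $i$. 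The constraints $k\in\mathcal{M}_j$, $i\in\mathcal{N}_k$, and $k'\in\mathcal{M}_i$ admit at most $|\mathcal{M}|\cdot|\mathcal{N}|\cdot|\mathcal{M}|=|\mathcal{M}|^2|\mathcal{N}|$ triples, each contributing $O(1)$; including the prefactor $1/(N_K(m)N(m)^2)$, the bounded-difference coefficient is $c_j=O\!\left(\frac{|\mathcal{M}|^2|\mathcal{N}|}{N_K(m)N(m)^2}\right)$.

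Summing over the $p\approx Lm$ parameters gives $\sum_j c_j^2=O\!\left(\frac{Lm\,|\mathcal{M}|^4|\mathcal{N}|^2}{N_K(m)^2N(m)^4}\right)$, and McDiarmid's inequality then produces the Gaussian tail $\exp\!\left(-c\varepsilon^2 \frac{N_K(m)^2N(m)^4}{Lm|\mathcal{M}|^4|\mathcal{N}|^2}\right)$ claimed in the statement. I expect the main obstacle to be precisely the triple-counting step: a naive estimate that first bounds each affected term $T_i$ by $O(|\mathcal{M}|^2)$ and then multiplies by the $O(|\mathcal{M}||\mathcal{N}|)$ affected indices overcounts by a factor of $|\mathcal{M}|$ and yields $|\mathcal{M}|^6$ rather than the sharp $|\mathcal{M}|^4$; obtaining the stated exponent requires exploiting the shared light-cone structure through the joint count of $(i,k,k')$. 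A secondary technicality is verifying the $O(1)$ derivative bounds uniformly in $\xbf$ and in $\thetabf_0$, which follows from unitarity and the spectral assumption on the $\mathcal{O}_k$, and checking that the $p\approx Lm$ bookkeeping is also valid when $L$, $|\mathcal{M}|$, and $|\mathcal{N}|$ are allowed to depend on $m$.
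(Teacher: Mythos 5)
This statement is imported verbatim as Theorem~4.13 of \cite{Girardi2025}; the paper does not prove it, so there is no in-paper proof to compare against. Your reconstruction---writing $\hat{\Kbf}_{\thetabf_0}(\xbf,\xbf')$ as a sum over parameters of products of light-cone-restricted derivative sums, and applying McDiarmid's inequality with bounded-difference coefficients $c_j = O\bigl(|\mathcal{M}|^2|\mathcal{N}|/(N_K(m)N(m)^2)\bigr)$ obtained by counting admissible triples $(i,k,k')$ rather than multiplying worst-case per-index bounds---is sound, and the resulting $\sum_j c_j^2 = O\bigl(Lm|\mathcal{M}|^4|\mathcal{N}|^2/(N_K(m)^2N(m)^4)\bigr)$ reproduces exactly the stated exponent; the uniform $O(1)$ bounds on $|f_k|$ and $|\partial_{\theta_i}f_k|$ follow, as you say, from unitarity, the $\{-1,+1\}$ spectrum of each $\mathcal{O}_k$, and the bounded single-qubit generators. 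This is the standard (and, to the best of my knowledge, the actual) route to this concentration bound in the cited reference; the only cosmetic discrepancy is the factor of $2$ in front of McDiarmid's two-sided tail, which is absorbed into the constant $c$.
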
 

Note that the scaling of $|\mathcal N|$ and $|\mathcal M|$ with respect to the width $m$ and depth $L$ is dictated by the specific connectivity structure of the quantum ansatz. For instance, geometrically local circuits\footnote{Circuits where only neighbouring qubits interact \cite{abedi2023quantum}.} restrict light cone growth primarily to the circuit depth, whereas all-to-all architectures allow them to expand rapidly to cover the entire system size $m$.
Furthermore, it can be verified  that for circuits satisfying the conditions in Theorem~\ref{thm:cvgQNTK} and Assumption~\ref{ass:cvgtolimqntk}, $\Omega(1) \leq N_K(m) < O(|\mathcal N|)$ (see Lemma 4.9 in \cite{Girardi2025}).

Throughout this work, we consider QNN circuit architectures that satisfy \eqref{eq:condition}.
In particular, this  assumption is satisfied by the following 
QNN architectures:
\begin{itemize}
\item Constant-Depth Circuits ($L=O(1)$): For geometrically local circuits with a fixed depth $L$, the light cone sizes $|\mathcal{M}|$ and $|\mathcal{N}|$ are $O(1)$. Assuming the normalization $N(m) = \Omega(\sqrt{m})$, the convergence condition in~\eqref{eq:condition} is satisfied.

\item Logarithmic-Depth Circuits ($L=O(\log m)$). As established in \cite{Girardi2025}, allowing the circuit depth to grow with the number of qubits is a necessary condition for achieving quantum advantage. With a normalization of $N(m) = \Omega(\sqrt{m})$, it is straightforward to verify that our theoretical assumptions are satisfied by logarithmic-depth architectures. Section 2.5 of \cite{Girardi2025} provides specific examples of circuit constructions that meet these criteria.

\item Polynomial-Depth Circuits ($L=O(m^{\alpha})$ for some small $\alpha>0$). 
In our no-training regime, one might explore even larger (e.g., polynomial) depth scalings to further boost expressivity and potential for achieving more significant quantum advantage. However, this increased depth comes with additional trade-offs, including concerns about light-cone growth, QNTK concentration, and kernel scaling. A detailed discussion of these implications is provided in Section~\ref{sec:comparison_w_other_models}.

\end{itemize}

\section{Quantum Neural Tangent Kernel-Based UCB Algorithm} \label{qntkucbsec}
In this section, we introduce Quantum Neural Tangent Kernel (QNTK)-UCB, a new algorithm for contextual bandits based on QNTK. Although QNNs are universal function approximators \cite{goto2021universal,gonon2025universal}, training them via gradient descent is notoriously difficult due to the ``barren plateau'' phenomenon \cite{mcclean2018barren}, where network gradients vanish exponentially with the number of qubits.
As a result, directly  training the QNN model $f(\xbf; {\thetabf})$ to approximate the unknown reward function $h(\xbf)$ becomes computationally prohibitive and unstable.
To circumvent this, in our proposed algorithm, we freeze the QNN at a random initialization and utilize its associated Quantum Neural Tangent Kernel (QNTK) for regression.


\subsection{Algorithm Description}

Our proposed algorithm, \emph{QNTK-UCB}, is a kernelized UCB policy where the kernel is the empirical QNTK (defined in \eqref{eq:qntk-empirical}) induced by a randomly initialized QNN. The core advantage is that it bypasses the gradient-based training of the QNN, thereby circumventing the optimization difficulties posed by the barren plateau problem, while preserving the expressivity and inductive bias inherent in quantum neural networks.


The validity of this kernelized approach rests on the ``lazy training" phenomenon observed in over-parameterized networks. To this end, we first establish (see Lemma~\ref{lemma:realizable}) that for sufficiently wide QNNs, the reward function $h(\xbf)$ is realizable as a linear function in the tangent feature space. Specifically, there exists a parameter vector $\thetabf^*$ such that for all contexts $\xbf^i$, the reward is well-approximated by the first-order Taylor expansion around the initialization $\thetabf_0$: 
\begin{align}h(\xbf^i)= f(\xbf^i;\thetabf_0)+\langle \nabla_{\thetabf}f(\xbf^i;\thetabf_0),\thetabf^*-\thetabf_0\rangle. \label{eq:linearized_model}\end{align} Note that $\bm{\theta}_0$ is the parameter vector drawn from a random initialization distribution (e.g., each element in $\bm{\theta}_0$ is uniform over the interval $[0,\pi]$). 

We treat the randomly initialized QNN as a static feature extractor and define the $p$-dimensional feature map $\phibf: \mathcal{X} \to \mathbb{R}^p$ as the scaled gradient of the QNN model evaluated at ${\thetabf}_0$:
\begin{equation}\label{eq:feature-map}
\phibf(\xbf) := \frac{1}{\sqrt{N_K(m)}}\nabla_{\thetabf} f(\xbf;{\thetabf} )\big|_{{\thetabf}={\thetabf}_0}.
\end{equation}
This construction ensures that the inner product in feature space recovers the empirical QNTK, i.e., $\phibf(\xbf)^\top \phibf(\xbf') = \hat{\Kbf}_{\bm{\theta}_0}(\xbf, \xbf')$.

Our algorithm then proceeds as an instance of kernelized bandit \cite{valko2013finite} on this explicit feature space.  
At each round $t$, the agent maintains a regularized design matrix $\Zbf_{t-1}$ and a reward-weighted feature vector $\bbf_{t-1}$ which are defined as 
$$\Zbf_{t-1} = \lambda \Ibf + \sum_{\tau=1}^{t-1} \phibf(\xbf_{\tau, a_\tau}) \phibf(\xbf_{\tau, a_\tau})^\top 
\quad\mbox{and}\quad \bbf_{t-1} = \sum_{\tau=1}^{t-1} r_{\tau,a_{\tau}} \phibf(\xbf_{\tau, a_\tau}),$$
where $\lambda > 0$ is the regularization parameter.
The unknown linear parameter $\bm{\theta}^*$ 
is estimated via ridge regression as follows: $\hat{\bm{\theta}}_{t-1} = \Zbf_{t-1}^{-1} \bbf_{t-1} +\thetabf_0$.  
To balance exploration and exploitation, the agent selects the arm that maximizes a certain Upper Confidence Bound:
$$a_t = \operatorname*{arg max}_{a \in [K]} \left\{ \phibf(\xbf_{t,a})^\top (\hat{\bm{\theta}}_{t-1}-\thetabf_0) + \beta_{t-1} \sqrt{\phibf(\xbf_{t,a})^\top \Zbf_{t-1}^{-1} \phibf(\xbf_{t,a})} \right\}.$$
Here, $\beta_{t-1}$ is an exploration radius that controls the confidence width. Its precise value is derived from the regret analysis in Section \ref{sec:reg_analysis}. The complete procedure is summarized in Algorithm~\ref{alg:qntk-ucb}.

\begin{algorithm}[h]
\caption{QNTK-UCB Algorithm}\label{alg:qntk-ucb}
\begin{algorithmic}[1]
\State \textbf{Input:} Regularization parameter $\lambda > 0$, exploration parameter $\nu$, confidence parameter $\delta \in (0, 1)$, norm parameter $S$, number of rounds $T$.

\State \textbf{Initialization:}
\State Randomly initialize QNN parameters ${\thetabf}_0$.
\State Define feature map $\phibf(\xbf) = \frac{1}{\sqrt{N_K(m)}}\nabla_{{\thetabf}} f(\xbf;{\thetabf}_0)$.
\State Initialize $\Zbf_0 = \lambda \Ibf_p$ (where $p = \dim({\thetabf}_0)$) and $\bbf_0 = \mathbf{0} \in \mathbb{R}^p$. 

\For{$t=1, 2, \dots, T$}

\State Observe contexts $\{\xbf_{t,a}\}_{a=1}^K$.
\State Compute ridge regression estimate: $\hat{{\thetabf}}_{t-1} = \Zbf_{t-1}^{-1} \bbf_{t-1} +\thetabf_0 $.

\State Compute exploration radius $\beta_{t-1} = \nu\sqrt{\log\frac{\det(\Zbf_{t-1})}{\det(\lambda \Ibf)} + 2\log\big(\frac{1}{\delta}\big)}
+\sqrt{\lambda}S.$

\For{each arm $a \in [K]$}
\State Compute features: $\phibf_{t,a} = \phibf(\xbf_{t,a})$.
\State Compute predicted reward: $\hat{h}_{t-1}(\xbf_{t,a}) = \phibf_{t,a}^\top (\hat{{\thetabf}}_{t-1}-\thetabf_0)$.
\State Compute  width of confidence bound: $w_{t,a} = \beta_{t-1} \sqrt{\phibf_{t,a}^\top \Zbf_{t-1}^{-1} \phibf_{t,a}}$.
\State Compute UCB: $U_{t,a} = \hat{h}_{t-1}(\xbf_{t,a}) + w_{t,a}$.
\EndFor

\State Select action: $a_t = \arg\max_{a \in [K]} U_{t,a}$.
\State Observe reward $r_t = r_{t,a_t}$.
\State Update $\Zbf_t = \Zbf_{t-1} + \phibf_{t,a_t} \phibf_{t,a_t}^\top$.
\State Update $\bbf_t = \bbf_{t-1} + r_t \phibf_{t,a_t}$.
\EndFor

\end{algorithmic}
\end{algorithm}

\paragraph{Comparison with Existing Algorithms. }
The QNTK-UCB algorithm can be viewed as a quantum counterpart to   neural bandit algorithms \cite{neuralucb}, yet it possesses distinct operational and theoretical characteristics:
\begin{itemize}
    \item The classical NeuralUCB algorithm \cite{neuralucb} trains a classical neural network at regular intervals using gradient descent. While effective, applying this directly to quantum circuits is problematic due to the barren plateau phenomenon, where gradients vanish exponentially with system size, making training unstable or impossible. 

    QNTK-UCB circumvents this issue entirely by utilizing the {\em   fixed geometry} of the quantum feature space at initialization. By treating the quantum circuit as a static kernel rather than a trainable model, our approach requires no gradient updates during the bandit interaction, ensuring algorithmic stability while retaining the expressivity of the quantum ansatz. We discuss the implications of this fixed geometry on parameter efficiency in Section \ref{sec:comparison_w_other_models}.
    
    
    \item Although our algorithm shares a similar algebraic structure with KernelUCB \cite{kernelucb}, the key difference lies in the kernel employed. Standard classical kernels, such as RBF and Matérn, often struggle to accurately model quantum reward functions. In contrast, the QNTK  explicitly incorporates the inductive bias of quantum circuits, accounting for features like entanglement structure and data encoding methods. This enables QNTK-UCB to learn effectively in ``quantum-native'' environments, such as when analyzing the properties of quantum states, where classical kernels fail to capture essential correlations. This will be made more apparent in the experimental results. 

\end{itemize}

\subsection{Regret Analysis} \label{sec:reg_analysis}

In this section, we provide   theoretical guarantees for our QNTK-UCB algorithm. Our analysis relies on the concentration of the empirical QNTK to its limit, allowing us to bound the regret in terms of the quantum effective dimension. 

In the following, we overload the notation $\Kbf$ (and likewise $\hat{\Kbf}$ and $\bar{\Kbf}$) to denote either the {\em kernel function} $\Kbf(\xbf,\xbf')$ or the corresponding  {\em kernel matrix} $\Kbf = [\Kbf_{ij}]$, whose $(i,j)$-th element $\Kbf_{ij}$ is the kernel function evaluated at the $i$-th and $j$-th data points, i.e.,  $\Kbf_{ij} =\Kbf(\xbf^i, \xbf^j)$. Equipped with this notation, we first define the quantum effective dimension.

\begin{definition}\label{def:q-eff-dim}
  The {\em quantum effective dimension} $\widetilde{d}_{\mathrm{q}}(\lambda)$ of the quantum neural tangent kernel on the dataset $\mathcal{X}_{1:TK}$ is defined as: 
$$\widetilde{d}_{\mathrm{q}}(\lambda) = \frac{\log \det(\Ibf_{TK} + \bar{\Kbf} / \lambda)}{\log(1 + TK / \lambda)},$$ where $\bar{\Kbf}$ is the limiting QNTK defined in Assumption~\ref{ass:cvgtolimqntk}. 
\end{definition}

Intuitively, the quantum effective dimension measures the ``capacity'' of the feature space relative to the available data. When it is clear from context, we omit the dependence of $\widetilde{d}_{\mathrm{q}}(\lambda)$ on $\lambda$ and simply write $\widetilde{d}_{\mathrm{q}}$. 
This definition mirrors the effective dimension used in classical kernel bandits \cite{neuralucb} and intuitively measures the complexity of the feature space defined by the QNTK.


In addition to the  structural assumptions pertaining to  QNNs in Assumptions~\ref{fcvgtogp} and~\ref{ass:cvgtolimqntk},  
we make the following assumptions that are mild and standard in kernel bandit literature~\cite{cao2019,neuralucb}: 
\begin{restatable}{assumption}{bndx} \label{ass1}
The context vectors $\xbf_{t,a}$ satisfy $\|\xbf_{t,a}\|_2 = 1$ for all $t \in [T]$ and $a \in [K]$.
\end{restatable}
In fact, this can be assumed without loss of generality, by  normalizing the context vectors appropriately.
\begin{restatable}{assumption}{pdKlim}\label{ass2}
    The limiting QNTK matrix $\bar{\Kbf}$ (evaluated on the dataset $\mathcal{X}_{1:TK}$) is   positive definite, with a minimum eigenvalue $\lambda_0 > 0$, i.e. $\bar{\Kbf} \succeq \lambda_0 \Ibf$.
\end{restatable}

\paragraph{Main Result}

\begin{restatable}{theorem}{thmregret} \label{thm:regret}
Fix any $\delta\in(0,1)$. Let $m = \bigOmega\left( \frac{(TK)^3}{\lambda^2}\log\left(\frac{(TK)^2}{\delta}\right) \right)$. Then, 
with probability at least $1-\delta$, the cumulative regret of QNTK-UCB  (Algorithm \ref{alg:qntk-ucb}) satisfies
\begin{align*}
R_T&    := \sum_{t=1}^T \left[ h(\xbf_{t, a_t^*}) - h(\xbf_{t, a_t}) \right] \\*
&\le
3\sqrt{T}\sqrt{\widetilde{d}_{\mathrm{q}}\log \Big(1+\frac{TK}{\lambda}\Big)+1}\Big(\nu\sqrt{\widetilde{d}_{\mathrm{q}}\log\!\Big(1+\frac{TK}{\lambda}\Big)+ 1 +2\log\big(\frac{1}{\delta}\big)}+\sqrt{\lambda}S\Big),
\end{align*}
where $S \geq  \sqrt{2\hbf^\top \bar{\Kbf}^{-1} \hbf}$, $\hbf = [h(\xbf^1), \dots, h(\xbf^{TK})]^\top$,  and $\lambda \geq \max\{1,S^{-2}\}$. 
Ignoring logarithmic terms and constants,   this bound simplifies to $$ 
R_T = \tilde{\mathcal{O}}\left(\widetilde{d}_{\mathrm{q}} \sqrt{T} \right)
$$
\end{restatable}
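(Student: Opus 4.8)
The plan is to run the standard optimism-based (OFUL / LinUCB) regret analysis in the explicit tangent feature space of \eqref{eq:feature-map}, but to phrase the capacity term through the quantum effective dimension $\widetilde{d}_{\mathrm{q}}$ of Definition~\ref{def:q-eff-dim}, deploying the QNTK concentration of Theorem~\ref{thm:cvgQNTK} only at the end to pass from the empirical kernel $\hat{\Kbf}$ used by the algorithm to the limiting kernel $\bar{\Kbf}$ defining $\widetilde{d}_{\mathrm{q}}$. First I would use Lemma~\ref{lemma:realizable} and \eqref{eq:linearized_model} to write the reward linearly in feature space: setting $w^* := \sqrt{N_K(m)}(\thetabf^*-\thetabf_0)$ gives $h(\xbf^i)=\phibf(\xbf^i)^\top w^*$ (the offset $f(\xbf^i;\thetabf_0)$ being absorbed / negligible at large width via the zero-mean condition of Assumption~\ref{fcvgtogp}), with $\norm{w^*}_2^2=\hbf^\top\hat{\Kbf}^{-1}\hbf$ for the minimum-norm interpolant. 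The ridge estimate $\hat{\thetabf}_{t-1}-\thetabf_0=\Zbf_{t-1}^{-1}\bbf_{t-1}$ is then exactly the kernelized least-squares estimate of $w^*$.

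Next I would build the confidence ellipsoid. Using $\phibf(\xbf)^\top\phibf(\xbf')=\hat{\Kbf}(\xbf,\xbf')$ and the $\nu$-sub-Gaussian martingale-difference structure of $\xi_t$, the self-normalized tail bound for vector-valued martingales yields, with probability at least $1-\delta$ and simultaneously for all $t$, the inclusion $\norm{\hat{\thetabf}_{t-1}-\thetabf_0-w^*}_{\Zbf_{t-1}}\le\beta_{t-1}$ with $\beta_{t-1}$ exactly as in Algorithm~\ref{alg:qntk-ucb}. The bias term $\sqrt{\lambda}S$ there needs a bound on $\norm{w^*}_2=\sqrt{\hbf^\top\hat{\Kbf}^{-1}\hbf}$; since both $S$ and $\widetilde{d}_{\mathrm{q}}$ are stated through $\bar{\Kbf}$, I would combine Assumption~\ref{ass2} ($\bar{\Kbf}\succeq\lambda_0\Ibf$) with a spectral comparison $\hat{\Kbf}\succeq\tfrac12\bar{\Kbf}$ (valid once $\norm{\hat{\Kbf}-\bar{\Kbf}}\le\lambda_0/2$) to get $\hbf^\top\hat{\Kbf}^{-1}\hbf\le 2\hbf^\top\bar{\Kbf}^{-1}\hbf\le S^2$, which explains the factor $\sqrt2$ in the stated $S$.

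Granting this event, optimism is routine: the UCB over-estimates $h(\xbf_{t,a_t^*})$ and $a_t$ maximizes it, so the per-round regret is at most $2\beta_{t-1}\norm{\phibf_{t,a_t}}_{\Zbf_{t-1}^{-1}}$. Summing over $t$, Cauchy--Schwarz, and the elliptical-potential (log-determinant) lemma give $R_T\le 2\beta_{T-1}\sqrt{T}\sqrt{2\log\!\big(\det(\Zbf_T)/\det(\lambda\Ibf)\big)}$; the Gram--kernel identity together with monotonicity of $\log\det$ under passing to the full $TK$-point kernel gives $\log\big(\det(\Zbf_T)/\det(\lambda\Ibf)\big)\le\log\det(\Ibf_{TK}+\hat{\Kbf}/\lambda)$. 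Replacing $\hat{\Kbf}$ by $\bar{\Kbf}$ up to an $O(1)$ slack and invoking Definition~\ref{def:q-eff-dim} bounds this by $\widetilde{d}_{\mathrm{q}}\log(1+TK/\lambda)+1$; feeding this into both $\beta_{T-1}$ and the potential factor and rounding the constant $2\sqrt2$ up to $3$ reproduces the claimed bound.

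The crux, and what fixes $m=\bigOmega\!\big((TK)^3\lambda^{-2}\log((TK)^2/\delta)\big)$, is the passage from $\hat{\Kbf}$ to $\bar{\Kbf}$ at the matrix level. Theorem~\ref{thm:cvgQNTK} supplies only \emph{entrywise} concentration $\abs{\hat{\Kbf}_{ij}-\Kbf_{ij}}\le\varepsilon$; I would union-bound over the $(TK)^2$ entries and combine with Assumption~\ref{ass:cvgtolimqntk} ($\Kbf\to\bar{\Kbf}$) to control each entry against $\bar{\Kbf}$. Upgrading this to the two matrix-level facts used above---the spectral bound $\norm{\hat{\Kbf}-\bar{\Kbf}}\le\lambda_0/2$ and the $\log\det$ perturbation---is where the polynomial factors arise: via $\norm{A}_*\le\sqrt{TK}\,\norm{A}_F\le(TK)^{3/2}\max_{ij}\abs{A_{ij}}$, forcing the $\log\det$ gap to be $O(1)$ demands $\varepsilon=O(\lambda/(TK)^{3/2})$, and substituting this $\varepsilon$ together with the $\log((TK)^2/\delta)$ from the union bound into the exponent of Theorem~\ref{thm:cvgQNTK}---with the architecture normalizations $N(m)=\bigOmega(\sqrt m)$, $N_K(m)=\bigOmega(1)$ and $L,\abs{\mathcal N},\abs{\mathcal M}$ controlled under \eqref{eq:condition}---produces exactly the cubic dependence on $TK$. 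The delicate bookkeeping is to ensure the two requirements on $m$ (one from the ellipsoid/norm comparison, one from the $\log\det$ slack) are met simultaneously and that the resulting additive and multiplicative slacks collapse into the innocuous $+1$ and the constants $\sqrt2$ and $3$ of the statement.
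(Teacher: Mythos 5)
Your proposal matches the paper's proof essentially step for step: the same realizability argument with the spectral comparison $\hat{\Kbf}\succeq\tfrac12\bar{\Kbf}$ explaining the $\sqrt{2}$ in $S$, the same self-normalized confidence ellipsoid and elliptical-potential bound, the same first-order $\log\det$ perturbation giving a $\frac{\sqrt{TK}}{\lambda}\|\hat{\Kbf}-\bar{\Kbf}\|_{\mathrm{F}}$ slack that is tuned to the additive $+1$, and the same entrywise-union-bound route (requiring $\varepsilon=O(\lambda/(TK)^{3/2})$) to the $(TK)^3$ width requirement. The only cosmetic difference is that the paper disposes of the offset $f(\xbf;\thetabf_0)$ via an explicit centering convention (Assumption~\ref{ass:zero_output}) rather than your appeal to the zero-mean condition of Assumption~\ref{fcvgtogp}, which is a without-loss-of-generality device in both cases.
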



\paragraph{Proof Sketch.} 
The proof (detailed in Appendix \ref{secA}) proceeds in three steps:
\begin{enumerate}

\item \textbf{Concentration and Quantum Linear Realizability.} We build upon recent findings regarding the behavior of Gaussian processes in quantum circuits \cite{Girardi2025} to control the concentration of the empirical QNTK $\hat{\Kbf}$ around the limiting $\bar{\Kbf}$. 
A technical innovation here is utilizing the concentration of measure on the unitary group  to bound the spectral distance $\|\hat{\Kbf} - \bar{\Kbf}\|_{\mathrm{F}}$ purely as a function of circuit architecture, independent of optimization dynamics (Lemma~\ref{lemma:conc_mtx_error}).
This convergence enables us to effectively ``linearize'' the QNN. 
Specifically, Lemma~\ref{lemma:realizable} guarantees the existence of a parameter vector $\bm{\theta}^* \in \mathbb{R}^p$ such that $h(\xbf)= \langle \nabla_{\thetabf}f(\xbf;\thetabf_0),\thetabf^*-\thetabf_0\rangle$ for all $\xbf \in \mathcal{X}_{1:TK}$, with the constraint that $\|\bm{\theta}^*\|_2 \le S$. 

\item \textbf{Confidence Ellipsoid and Instantaneous Regret.} 
We subsequently construct a confidence ellipsoid for the unknown parameter $\bm{\theta}^*$ and utilize the self-normalized martingale inequality for vector-valued martingales (Lemma \ref{lem:selfnorm}). 
While standard proofs for  classical NeuralUCB \cite{neuralucb} must bound the drift of the NTK during gradient descent to ensure the confidence sets remain valid (often requiring prohibitive width scaling), our analysis exploits the static geometry of the frozen quantum ansatz. Since the parameters are fixed at initialization, the kernel exhibits no drift, allowing us to guarantee that,  
with high probability, the true parameter resides within a bounded region centered around the ridge regression estimate.


\item \textbf{Determinant Bound and Total Regret.} Finally, we establish a bound on the cumulative regret by relating it to the sum of predictive variances, which is governed by the log-determinant of the kernel matrix (Lemma \ref{lem:elliptical}). This allows us to introduce  the \emph{quantum effective dimension} $\widetilde{d}_{\mathrm{q}}$  in Lemma \ref{lem:logdet-qntk}, demonstrating that the regret primarily scales with $\widetilde{d}_{\mathrm{q}}$. This dimension effectively captures the distinctive inductive bias of the quantum ansatz, thereby formally connecting the spectral decay of the specific quantum architecture to the learning efficiency of the algorithm.
\end{enumerate}


\begin{corollary}
    Under the same conditions as Theorem~\ref{thm:regret}, the expected cumulative regret satisfies $$
    \bar{R}_T \le
3\sqrt{T}\sqrt{\widetilde{d}_{\mathrm{q}}\log \Big(1+\frac{TK}{\lambda}\Big)+1}\Big(\nu\sqrt{\widetilde{d}_{\mathrm{q}}\log\!\Big(1+\frac{TK}{\lambda}\Big)+ 1 +2\log T}+\sqrt{\lambda}S\Big) +1 = \tilde{\mathcal{O}}\left(\widetilde{d}_{\mathrm{q}} \sqrt{T} \right)
    $$
\end{corollary}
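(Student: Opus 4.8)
The plan is to derive the expected-regret bound directly from the high-probability pathwise bound of Theorem~\ref{thm:regret} via a standard good-event/bad-event decomposition, combined with a judicious choice of the failure probability $\delta$. First I would observe that, by linearity of expectation, the expected cumulative regret is exactly the expectation of the realized regret:
\[
\bar{R}_T = \mathbb{E}\left[\sum_{t=1}^T \big(h(\xbf_{t,a_t^*}) - h(\xbf_{t,a_t})\big)\right] = \mathbb{E}[R_T],
\]
where the expectation is over the reward noise $\{\xi_t\}$ and the random initialization $\thetabf_0$; the contexts $\mathcal{X}_{1:TK}$ (and hence $a_t^*$) are fixed, so that the quantum effective dimension $\widetilde{d}_{\mathrm{q}}$ and the entire right-hand side of Theorem~\ref{thm:regret}, which I denote $B(\delta)$, are deterministic quantities.

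Next I would record a trivial, sure control on the realized regret. Since $h$ is $[0,1]$-bounded by the reward model in \eqref{eq:reward}, each instantaneous regret satisfies $0 \le h(\xbf_{t,a_t^*}) - h(\xbf_{t,a_t}) \le 1$, whence $R_T \le T$ holds on every sample path. Letting $\mathcal{E}$ denote the event of Theorem~\ref{thm:regret} on which $R_T \le B(\delta)$, with $\mathbb{P}(\mathcal{E}) \ge 1-\delta$, I would then split the expectation as
\[
\mathbb{E}[R_T] = \mathbb{E}[R_T \mathbf{1}_{\mathcal{E}}] + \mathbb{E}[R_T \mathbf{1}_{\mathcal{E}^c}] \le B(\delta)\,\mathbb{P}(\mathcal{E}) + T\,\mathbb{P}(\mathcal{E}^c) \le B(\delta) + T\delta,
\]
using that $B(\delta)$ is a deterministic upper bound valid on $\mathcal{E}$ and that $R_T \le T$ holds surely on $\mathcal{E}^c$.

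Finally, I would choose $\delta = 1/T$ to balance the two contributions. This makes the bad-event term $T\delta = 1$, which is precisely the additive $+1$ appearing in the statement, and it replaces the $2\log(1/\delta)$ inside the confidence radius of $B(\delta)$ by $2\log T$, yielding exactly the claimed inequality. I would then verify that the over-parameterization requirement is preserved: substituting $\delta = 1/T$ into the hypothesis $m = \bigOmega\big((TK)^3\lambda^{-2}\log((TK)^2/\delta)\big)$ of Theorem~\ref{thm:regret} gives $m = \bigOmega\big((TK)^3\lambda^{-2}\log(T^3K^2)\big) = \tilde{\bigOmega}((TK)^3)$, so the corollary holds under the same order of width scaling. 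The asymptotic simplification to $\tilde{\mathcal{O}}(\widetilde{d}_{\mathrm{q}}\sqrt{T})$ is then immediate, since the dominant factor is unchanged and the additive $+1$ is lower order.

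This argument is essentially routine, so I do not anticipate a genuine obstacle; the only point that genuinely requires care is the deterministic nature of the right-hand side $B(\delta)$, which is what justifies pulling it out of the expectation restricted to $\mathcal{E}$. If one instead wished to treat the contexts as stochastic, the clean fix is to condition on the realized sequence $\mathcal{X}_{1:TK}$ throughout and take a final outer expectation, noting that the sure bound $R_T \le T$ is unaffected by this conditioning and that $\widetilde{d}_{\mathrm{q}}$ would then be interpreted as its (conditional) value on the realized dataset.
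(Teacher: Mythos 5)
Your proposal is correct and is exactly the argument the paper intends: the paper's proof is the one-line statement ``set $\delta = 1/T$,'' and your good-event/bad-event decomposition with the sure bound $R_T \le T$ (so the bad event contributes $T\delta = 1$) and the replacement of $2\log(1/\delta)$ by $2\log T$ is precisely the omitted bookkeeping. Your additional checks — that the right-hand side is deterministic and that the width requirement remains $\tilde{\Omega}((TK)^3)$ after substituting $\delta = 1/T$ — are sound and consistent with the paper.
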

\begin{proof}
    Follows from Theorem~\ref{thm:regret} by setting $\delta=1/T$.
\end{proof}

\subsection{Discussion and comparison with classical methods} \label{sec:comparison_w_other_models}

The regret guarantee in Theorem \ref{thm:regret} provides a foundation for analyzing the utility of QNTK-UCB. While the algebraic form of the regret bound mirrors that of standard kernelized bandits, the specific properties of the QNTK introduce distinct advantages in terms of parameter efficiency, inductive bias, and implicit regularization.

\paragraph{Regret and Parameter Efficiency} 



A significant implication of Theorem \ref{thm:regret} pertains to the model size needed to achieve the specified guarantees. Classical approaches, such as NeuralUCB \cite{neuralucb}, necessitate that the neural network operate within the ``lazy training'' or ``linear NTK'' regime to maintain the theoretical validity of the regret bound. In this regime, the weights are only adjusted  minimally from their initialization, ensuring that the empirical kernel remains effectively static.

However, achieving this regime imposes stringent constraints on model size. Notably, the network width $w$ must be exceedingly large to minimize the approximation error between the neural network and its linearized kernel. The analysis of NeuralUCB (see Lemma 5.1 and Lemma 5.4 in~\cite{neuralucb}) demonstrates that the width must scale as a high-order polynomial of the horizon, specifically $w = \tilde{\Omega}\left((TK)^6\right)$. Since the number of parameters $p$ in a fully connected network scales quadratically with the width, this results in a prohibitively high parameter requirement of:
$$p_{\mathrm{c},\mathrm{train}}=\tilde{\Omega}\left((TK)^{12}\right).$$  

Even for the static baseline NeuralUCB0 (or Classical NTK), which relies on a fixed NTK at initialization, a significant degree of over-parameterization is still necessary. This method adheres to the conditions outlined in Lemma 5.1 of \cite{neuralucb}, which requires the width to scale as $w = \tilde{\Omega}((TK)^4)$, leading to a parameter requirement of:
$$p_{ \mathrm{c}, \mathrm{no}\text{-}\mathrm{train}}=\tilde{\Omega}\left((TK)^{8}\right).$$

In contrast, our QNTK-UCB framework achieves similar regret guarantees with a markedly more efficient parameter scaling. As indicated in Theorem \ref{thm:regret}, the conditions for our bound hold if: $$p_{ \mathrm{q}, \mathrm{no}\text{-}\mathrm{train}} =\tilde{\Omega}\left((TK)^3\right).$$
This substantial difference underscores the advantage of quantum models in terms of model compactness. By utilizing the high-dimensional Hilbert space of a relatively small quantum circuit (with small \(p\)), QNTK-UCB offers a robust, mathematically guaranteed kernel regime without the excessive over-parameterization required to linearize classical deep networks.

\paragraph{Inductive Bias and Representational Power.}
In addition to enhancing efficiency, the QNTK  introduces a distinctive inductive bias. Classical kernels, such as RBF and Mat\'ern, or standard NTKs tend to favor classically smooth functions. In contrast, the QNTK is fundamentally shaped by the quantum circuit architecture, particularly its entanglement structure and the data encoding map $V(\xbf)$. 

QNNs project classical inputs into an exponentially large Hilbert space with dimension $2^m$. This high-dimensional embedding enables the QNTK to capture correlations that reflect the inherent properties of quantum mechanical processes, which can be challenging for classical models. As a result, we anticipate that QNTK-UCB will surpass classical benchmarks in ``quantum-native'' bandit tasks, such as optimizing variational quantum eigensolvers (VQEs) or classifying phases of matter, where the underlying reward function exhibits symmetries consistent with the quantum circuit. Our experimental results support this expectation.

Within the contextual bandit framework, the benefits of this quantum inductive bias can be understood via the spectral characteristics of the QNTK Gram matrix of the observed contexts. When the inductive bias of the QNTK aligns effectively with the ground truth reward function $h$, it results in a faster decay   of the eigenvalues compared to generic, isotropic classical kernels (e.g., RBF), which tend to disperse probability mass across the feature space. This sharper spectral decay minimizes the sum defining $\widetilde{d}_{\mathrm{q}}$, effectively narrowing the width of the confidence ellipsoid. Consequently, the algorithm requires significantly fewer samples to reduce the posterior variance below the optimality gap, thereby decreasing the sample complexity of exploration. 

\paragraph{Information gain and effective dimension.}

To interpret the regret bound in Theorem \ref{thm:regret}, it is useful to rewrite the log-determinant term as a kernel information gain quantity. Recall that the QNTK feature map $\phibf(\xbf) = \frac{1}{\sqrt{N_K(m)}} \nabla_{\bm{\theta}} f(\xbf;\bm{\theta}_0)$ induces the limiting Gram matrix $\bar{\Kbf} \in \mathbb{R}^{TK \times TK}$. The quantity $$\gamma_T^{(\mathrm{q})} := \log \det \left( \Ibf + \frac{1}{\lambda} \bar{\Kbf} \right)$$
is the standard information gain term appearing in kernel bandit analyses \cite{Srinivas_2012}. Our quantum effective dimension is exactly its normalized version: $$\widetilde{d}_{\mathrm{q}} = \frac{\gamma_T^{(\mathrm{q})}}{\log(1 + TK/\lambda)}.$$
Thus, the regret bound in Theorem \ref{thm:regret} is controlled by $\gamma_T^{( \mathrm{q})}$, or equivalently $\widetilde{d}_{\mathrm{q}}$.

In  classical NeuralUCB analysis \cite{neuralucb}, the same structure appears with the classical limiting NTK Gram matrix $\mathbf{H}$ \cite{ntk} in place of $\bar{\Kbf}$. The classical bound depends on: $$\gamma_T^{(\text{c})} := \log \det \left( \Ibf + \frac{1}{\lambda} \mathbf{H} \right) \quad\mbox{and}\quad \widetilde{d}_{\text{c}} = \frac{\gamma_T^{(\text{c})}}{\log(1 + TK/\lambda)}.$$
Our theorem demonstrates that, once we pass to the kernelized (training-free) regime, the distinction between ``quantum" and ``classical" enters primarily through the spectrum of the corresponding limiting kernel matrix, namely, $\bar{\Kbf}$ versus $\mathbf{H}$. Consequently, whenever the eigenvalues of $\bar{\Kbf}$ decay faster than those of $\mathbf{H}$ on the realized context sequence, we obtain a smaller information gain $\gamma_T^{(\mathrm{q})}$ and hence a tighter regret guarantee.

The behavior of $\widetilde{d}_{\mathrm{q}}$ highlights a quantum-specific trade-off regarding the ``barren plateau" phenomenon. In variational quantum algorithms, deeper or unstructured circuits often exhibit strong concentration-of-measure effects that manifest as exponentially vanishing gradients, making gradient-based training difficult. However, in our kernelized setting, this concentration plays a distinct, constructive role:
\begin{itemize}
    \item \textbf{Implicit Regularization via Gradient Scaling.}  The barren plateau phenomenon indicates that the magnitude of the gradients \( \|\nabla_{\bm{\theta}} f(\xbf; \bm{\theta}_0)\| \) vanishes exponentially as the number of qubits $m$ tends to infinity. Given that the empirical QNTK is based on the inner products of these gradients, this gradient concentration leads to a reduction in both the entries and eigenvalues of the Gram matrix \(\hat{\Kbf}\). Recent theoretical work~\cite{exp_conc_qntk} also demonstrates that high expressivity in QNNs leads to an exponential concentration of QNTK values toward zero. As a result, the term \(\gamma_T^{(\mathrm{q})} = \log\det(\Ibf + \bar{\Kbf}/\lambda) \) becomes significantly smaller compared to \(\gamma_T^{(\mathrm{c})}\),  the information gain of  kernels derived from wide, non-concentrated classical networks. This spectral compression serves as a form of implicit regularization, which may reduce the regret upper bound. 

\item However, it is important to note that the spectral shrinkage induced by concentration does not automatically confer benefits. Consider the realizability constant $S$ introduced in Theorem~\ref{thm:regret}, which is influenced by the norm of the reward function in the RKHS in the following manner:  $S^2 \approx \hbf^\top \bar{\Kbf}^{-1} \hbf$. If the   concentration phenomenon results in the  scaling down of the entire kernel by a constant factor, the parameter norm $S$ consequently increases, counteracting the advantages of a lower effective dimension.

The true quantum advantage arises when concentration is non-uniform. An effective quantum architecture should demonstrate Kernel-Target Alignment; that is, it should retain large eigenvalues along the specific directions that align with  the reward function \(\hbf\), ensuring that \(\hbf^\top \bar{\Kbf}^{-1} \hbf\) remains uniformly upper bounded. Meanwhile, it should concentrate significantly along the majority of orthogonal, ``irrelevant'' directions. In this scenario, the quantum effective dimension   $\widetilde{d}_{\mathrm{q}}$  decreases rapidly as the tail of the spectrum vanishes, while the realizability constant \(S\) stays small because the signal direction is preserved. This selective spectral decay is what enables QNTK-UCB to outperform classical baselines.

\end{itemize}



\section{Experiments}
To validate our theoretical findings, we compare the performance of QNTK-UCB against state-of-the-art classical neural bandit algorithms. Our experiments are designed to test the hypothesis that quantum kernels provide a superior inductive bias and higher parameter efficiency for non-linear reward functions.

\subsection{Gaussian Quantiles} \label{sec:exp_gaussQ}

\begin{figure*}[tbp]                
  \centering

  \begin{subfigure}{.33\textwidth}
    \includegraphics[width=\linewidth]{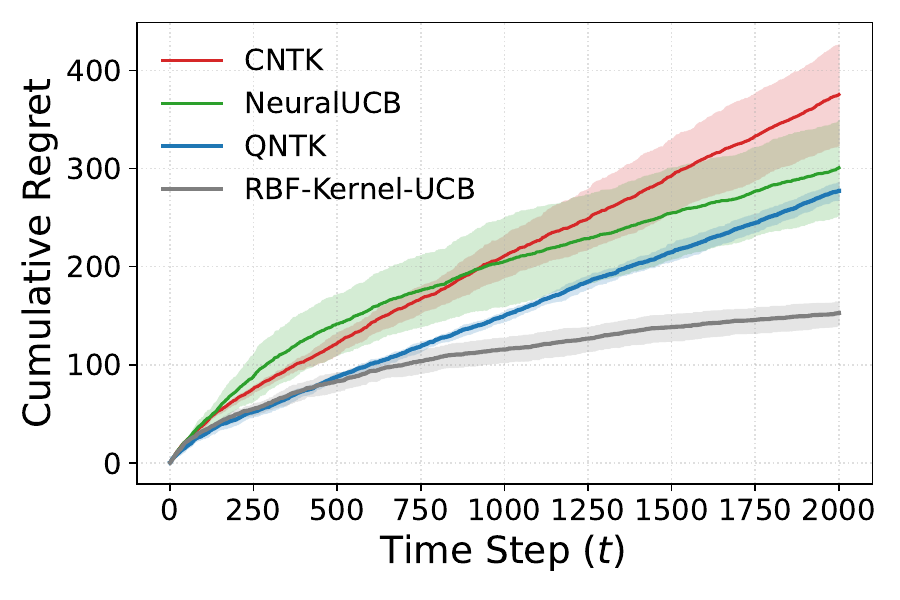}
    \caption{$m=3, p=36$}
  \end{subfigure}\hfill
  \begin{subfigure}{.33\textwidth}
    \includegraphics[width=\linewidth]{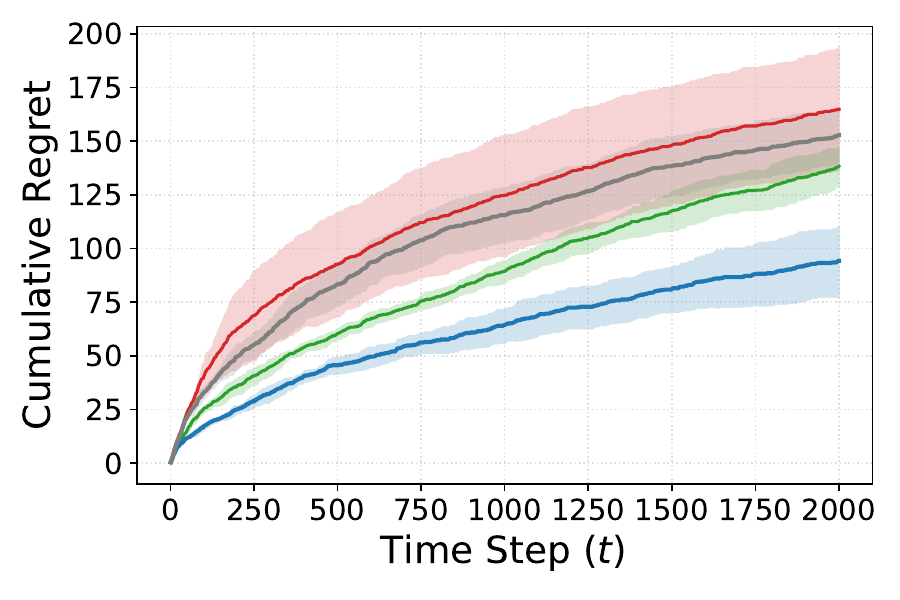}
    \caption{$m=5, p=60$}
  \end{subfigure}\hfill%
  \begin{subfigure}{.33\textwidth}
    \includegraphics[width=\linewidth]{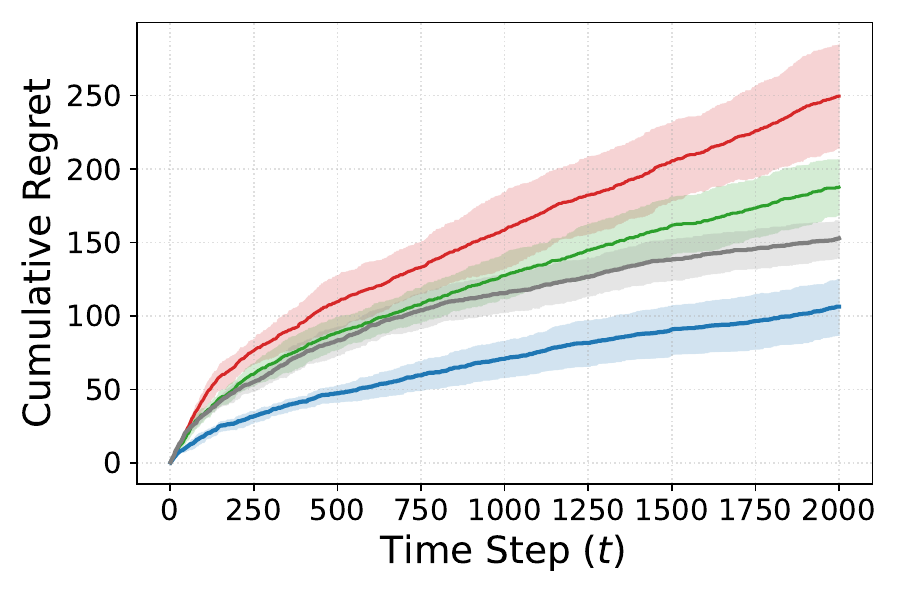}
    \caption{$m=10, p=120$}
  \end{subfigure}

  \caption{Bandit task with reward from Gaussian Quantile Classification 
  }
  \label{fig:GuassQ}
\end{figure*}

We consider a $K$-armed contextual bandit problem with $K=2$, where the reward function is defined by a non-linear decision boundary in $\mathbb{R}^d$. The base feature vectors, denoted as $\xbf_t$, are sampled from a multi-dimensional Gaussian distribution. The true class labels, $y_t \in \{0, 1\}$, are assigned according to Gaussian quantiles. Geometrically, this configuration creates concentric hypershells within the feature space, resembling a non-linear binary classification task akin to distinguishing between ``circles'' or ``spheres''.


At each round $t$, the agent receives a context vector $\xbf_t \in \mathbb{R}^d$. To model the arm-specific rewards, we employ the standard disjoint context encoding \cite{li2010contextual}: the agent observes a set of arm features $\{\xbf_{t,a}\}_{a \in \{0,1\}}$ where $\xbf_{t,0} = [\xbf_t, \mathbf{0}]$ and $\xbf_{t,1} = [\mathbf{0}, \xbf_t]$. The agent selects an arm $a_t$ and receives reward $r_t = 1$ if $a_t$ matches the true class label $y_t$, and $r_t = 0$ otherwise. 

We compare the following four benchmark algorithms:
\begin{itemize}
    \item QNTK (Ours): Uses the empirical quantum neural tangent kernel derived from a Strongly Entangling Layers ansatz with $L=4$ layers and varying number of qubits $m \in \{3, 5, 10\}$. The number of trainable parameters in this ansatz is $p = 3mL$.
    \item NeuralUCB: A classical neural contextual bandit that trains a Multi-Layer Perceptron (MLP) via gradient descent. The network consists of one hidden layer and uses the ReLU activation function. Optimization is performed using Adam with a learning rate of $\eta = 0.01$. 
    \item CNTK (Classical NTK, or NeuralUCB0 \cite{neuralucb}): A kernelized UCB algorithm using the fixed empirical NTK of a randomly initialized classical MLP.
    \item RBF-Kernel-UCB \cite{valko2013finite}: Kernelized UCB algorithm using the RBF kernel.
\end{itemize}



To fairly evaluate parameter efficiency, we constrain the classical models (NeuralUCB and C-NTK) to have the same number of trainable parameters $p$  as their quantum counterparts. For a given quantum circuit with $p_{\mathrm{q}}$ parameters, we analytically adjust the width of the classical MLP such that its total parameter count $p_{\mathrm{c}}$ satisfies $p_{\mathrm{c}} \approx p_{\mathrm{q}}$. 
For all models, we performed a grid search to optimize the regularization parameter $\lambda \in \{0.01, 0.1, 1.0\}$ and the fixed exploration radius $\beta_t = \beta \in \{0.05,0.1,0.5,1.0,3.0\}$.

Figure \ref{fig:GuassQ} illustrates the cumulative regret averaged over 30 independent trials with $T=2000$. We analyze the performance across three distinct regimes of model complexity:
\begin{itemize}
    \item Under-Parameterized Regime ($m=3$, $p=36$): As shown in Fig. \ref{fig:GuassQ}(a), the model capacity is constrained. All algorithms exhibit relatively steep regret curves, indicating that the model is too simple to perfectly capture the non-linear decision boundary. However, QNTK still achieves lower regret than the classical baselines. This confirms our hypothesis on parameter efficiency: even with minimal number of qubits, the quantum feature map provides a richer representation than a classical network of equivalent size, allowing for better approximation of the reward function under strict resource constraints.
    
    \item Optimal Regime ($m=5$, $p= 60$): In Fig. \ref{fig:GuassQ}(b), the model size increases to an intermediate level. Here, we observe clear sublinear regret for all methods, indicating that the models are sufficiently expressive to learn the task. QNTK maintains a clear lead, demonstrating the superiority of the quantum inductive bias on this task.
    
    \item Over-Parameterized ($m=10$, $p= 120$): Fig. \ref{fig:GuassQ}(c) reveals a divergence in behavior. For the classical methods (NeuralUCB and C-NTK), the cumulative regret becomes steeper compared to the $m=5$ case. This degradation is expected in classical learning theory: as the parameter count $p$ increases, the model requires more data to converge, and the variance term in the regret bound (governed by the effective dimension) grows. On the other hand, QNTK remains robust, its regret for $m=10$ is very close to that of $m=5$, showing no signs of performance degradation or overfitting. This empirically validates the implicit regularization property of the QNTK discussed in Section \ref{sec:comparison_w_other_models}. While the classical effective dimension increase with $p$, the quantum effective dimension $\widetilde{d}_{\mathrm{q}}$ saturates due to the concentration of the kernel spectrum, rendering the quantum algorithm resilient to high model complexity. 
\end{itemize}

\begin{figure}[t!] 
  \centering 
    \includegraphics[width=.75\linewidth]{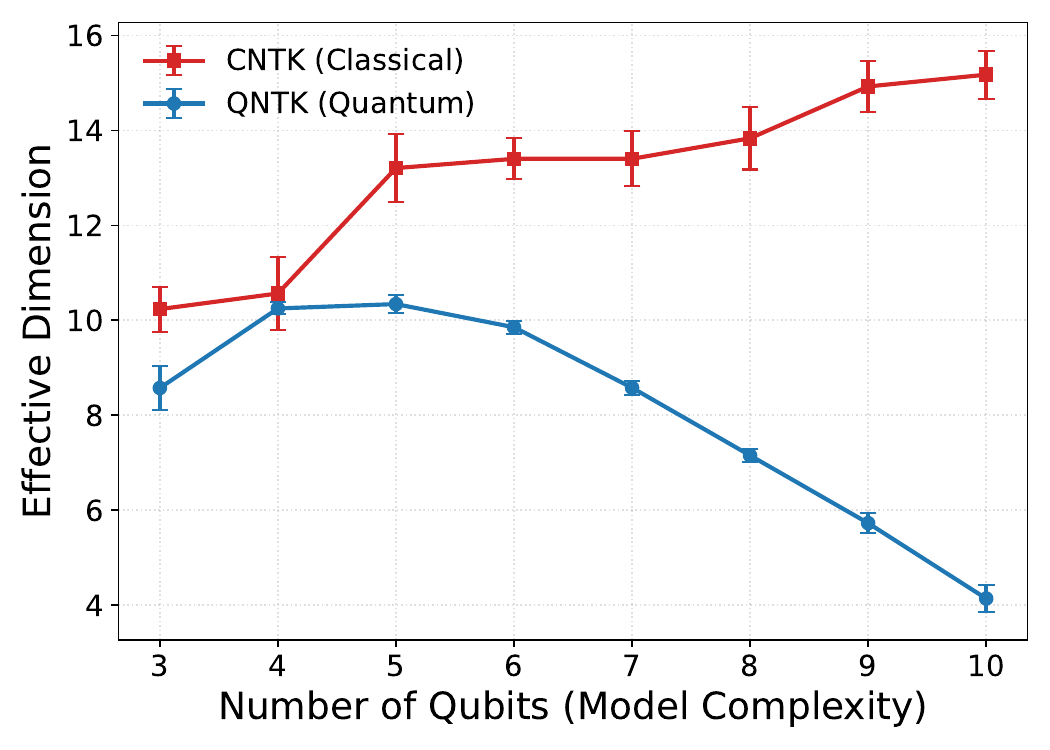}
    \caption{Change of Effective Dimension for Increasing Parameter size}
  \label{fig:circle_d_e}

\end{figure}

To further investigate the performance difference observed in Fig. \ref{fig:GuassQ}, we analyzed the effective dimension of the feature representations as function of the number of qubits (and hence the number of parameters). We plot the empirical feature dimension ($ \frac{\log \det(\mathbf{I} + \hat{\Kbf}/\lambda)}{\log(1 + TK/\lambda)}$) for $T=2000$ in Fig. \ref{fig:circle_d_e}. We observe two distinct behaviors that shed more light on our results.


The Classical NTK shown in red demonstrates a monotonic increase in effective dimension, consistently surpassing the QNTK. While this trend signifies high expressivity, the excessive dimensionality observed in the over-parameterized regime ($m=10$) suggests a potential ``over-spending'' of model capacity. This phenomenon directly correlates with the poorer regret performance illustrated in Fig.~\ref{fig:GuassQ}(c), where the classical model also experiences higher variance.

Conversely, the QNTK, depicted in blue, exhibits a general decreasing trend. The initial increase as $m$ rises from $3$ to $5$ reflects the necessary enhancement in expressivity required to effectively capture the non-linear decision boundary. Significantly, beyond $m=5$, the effective dimension reaches saturation and subsequently decreases. This behavior is indicative of the concentration of measure in the quantum feature space, wherein an intrinsic regularization mechanism curtails the quantum model's complexity, preventing it from growing unbounded with respect to parameter count.

\subsection{Online Quantum Initial State Recommendation}

\begin{figure}[t!] 
  \centering 
    \includegraphics[width=.75\linewidth]{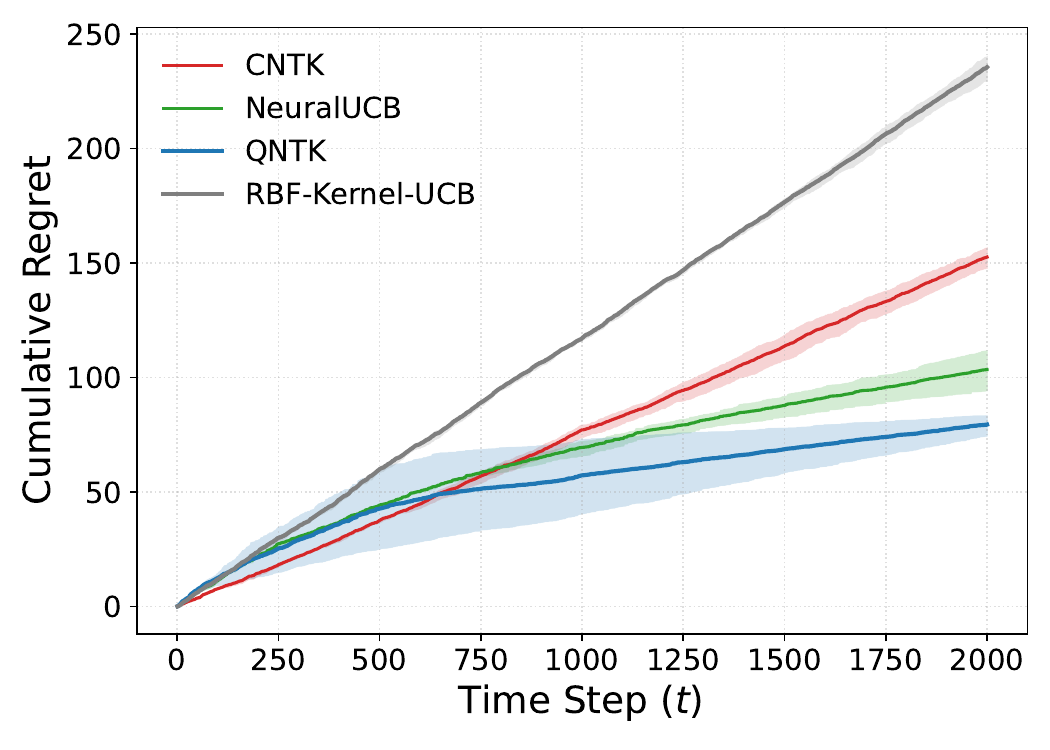}
    \caption{Bandit task with VQE optimization start point recommendation}
  \label{fig:vqe_recommender}

\end{figure}


To demonstrate the utility of QNTK in quantum-native tasks, we investigate the problem of identifying optimal initial states for Variational Quantum Eigensolvers (VQE) that find the ground state of a  Hamiltonian \cite{Brahmachari_2024}. 
VQE optimization is highly sensitive to initialization. We formulate the choice of initial VQE ansatz parameters as a bandit problem with side information (or contextual bandit problem), where the side information is given by the problem Hamiltonian.

We focus on a family of 4-qubit transverse field Ising Hamiltonians, $$H(c) = - \sum_{i=1}^{m-1} \sigma^Z_i \sigma^Z_{i+1} - c \sum_{i=1}^{m} \sigma^X_i,$$ where $\sigma^A_i$, for $ A\in \{X,Z\}$, denotes the Pauli-A operator acting on the $i$th qubit, and $c$ is the transverse field strength.  At each round $t$, the environment generates a field strength $c_t$ and constructs the corresponding Hamiltonian $H(c_t)$, which is revealed to the learner as a shared context. 

The learner has access to $K=5$ ``arms", each corresponding to a fixed distinct choice of initial state $|\psi_a\rangle$ for the VQE. At time $t$, the learner selects an arm $a_t \in [K]$ and the environment runs a short-depth VQE optimization initialized at $|\psi_{a_t}\rangle$, with a VQE ansatz $U(\bm{\theta})$ to approximate the ground state of $H(c_t)$. Here, we use a shallow 2-layer hardware-efficient ansatz $U(\bm{\theta})$ on $m=4$ qubits. The ansatz has 2 layers, each layer consists of per-qubit Euler rotations Rot$(\cdot)$ on all qubits followed by a linear entangling chain with CNOT. The energy objective is the  expectation $$E(\thetabf;c_t) = \langle \psi_{a_t}| U(\thetabf)^{\dagger} H(c_t) U(\thetabf) |\psi_{a_t}\rangle$$
and we perform $I=5$ gradient steps to obtain $\thetabf_I$. The resulting approximate ground state $\vert \psi(\thetabf_I|a_t,c_t)\rangle :=U(\bm{\theta}_I)|\psi_{a_t}\rangle$ depends on both the initial state and the Hamiltonian side information. 
The learner then observes the reward $$r_{t,a_t}=-{\rm Tr}(H(c_t)\vert \psi(\thetabf_I|a_t,c_t)\rangle \langle \psi(\thetabf_I|a_t,c_t) \vert) +\xi_t,$$ which is the negative final energy corrupted by Gaussian noise $\xi_t$ modeling the finite-shot measurement error. 


Similar  to  the previous experiment, we compare QNTK-UCB to NeuralUCB, CNTK-UCB (NeuralUCB0), and RBF-Kernel-UCB. To ensure a fair comparison, we ensured that the quantum and classical models have  the same number of parameters. Furthermore, we optimized the hyperparameters for all algorithms using the same grid search strategy as in Section~\ref{sec:exp_gaussQ};  selecting the regularization parameter $\lambda$ from the set $\{0.01,0.1,1.0\}$ and the fixed exploration radius $\beta$ from the set $\{0.05, 0.1, 0.5, 1.0, 3.0\}$. Figure~\ref{fig:vqe_recommender} displays the cumulative regrets of the various algorithms.

The QNTK agent clearly demonstrates superior performance compared to the classical baselines. The mapping from ``Hamiltonian parameter $c$" to ``Optimal Initial State" is governed by the underlying phase transition of the Ising model. The QNTK, being derived from a quantum circuit, naturally captures the correlations and symmetries of this Hilbert space landscape. On the other hand, the classical networks, lacking this specific inductive bias, require more samples to learn the mapping from Hamiltonian parameters to optimal ansatz initializations.

\section{Conclusion}\label{sec13}
We have introduced a new class of quantum-enhanced neural contextual bandit algorithms that not only achieve regret performance comparable to classical neural UCB methods but also do so with a significantly reduced number of model parameters. By leveraging recent advancements in QNTK theory, we derived a regret bound that scales as $\mathcal{O}(\tilde{d}_{\rm q}\sqrt{T})$, where $\tilde{d}_{\rm q}$ represents the effective dimension of the QNTK. This approach effectively operates within the QNTK regime, reducing to a kernelized model with a static quantum tangent kernel. As a result, we successfully navigated the challenges of barren plateaus and avoid the high computational costs associated with explicitly training QNNs  for contextual bandit applications.

Nonetheless, the reliance on a static kernel may constrain the model's expressivity when faced with complex reward functions. Future research could explore {\em   hybrid} quantum-classical models that balance quantum expressivity with classical trainability. Potential avenues include architectures that integrate quantum feature maps with trainable classical neural networks, as suggested by recent studies \cite{nakaji2023quantum}. 

Moreover, while our analysis underscores parameter reduction as a key source of quantum advantage, our framework also accommodates circuit depth that scales with the number of qubits. This flexibility enables the development of families of quantum circuits believed to be classically hard to simulate, thereby introducing a new class of uniquely quantum models whose dynamics cannot be efficiently replicated by classical algorithms. This positions our work not only as a significant step toward quantum efficiency in contextual bandits but also as a promising avenue for exploring quantum advantages that extend beyond mere parameter efficiency.


\backmatter





\bmhead{Acknowledgements}

STJ is supported through the Royal Society International Travel Grant (IES\textbackslash R2\textbackslash 242104) and partly through EPSRC Quantum Technologies Career Acceleration Fellowship.










\bibliography{sn-bibliography}

\begin{appendices}

\section{Proof of Theorem \ref{thm:regret}}\label{secA}

This section provides the proof for Theorem \ref{thm:regret}. First recall some definitions:

\begin{enumerate}
    \item Empirical QNTK, $\hat{\Kbf}$: The random neural tangent kernel computed from a single instance of a randomly initialized QNN (for a given fixed structure, at width $m$).
    $$ \hat{\Kbf}_{i j} = \frac{1}{N_K(m)} \nabla_\mathbf{{\thetabf}} f(\xbf^i;\thetabf_0)^\top \nabla_{\thetabf} f(\xbf^j;\thetabf_0). $$ We suppress the dependence of $\hat{\Kbf}$ on $\thetabf_0$. Unless stated otherwise, the empirical kernel is always evaluated at the random initialization $\thetabf_0$.

    \item Analytic QNTK, $\Kbf$: The expected value of the empirical kernel.
    $$ \Kbf_{i j} = \E_{\thetabf_0} \left[ \hat{\Kbf}_{i j} \right]. $$
    \item Limiting QNTK, $\bar{\Kbf}$: The limiting kernel that the analytic kernel converges to in the infinite-qubit limit.
    $$ \bar{\Kbf}_{i j} = \lim_{m\to\infty} \Kbf_{i j}. $$
\end{enumerate}

\noindent We make one additional assumption for notational convenience:

\begin{restatable}{assumption}{zero_output}\label{ass:zero_output}
The contexts are normalized so that $\|\xbf\|_2=1$ for all $\xbf\in\mathcal X$, and the QNN model is centered at initialization in the sense that
\begin{equation}\label{eq:zero_output}
f(\xbf;\thetabf_0)=0 \qquad \forall \xbf\in\mathcal X.
\end{equation}
\end{restatable}

Note that this assumption is mild and is imposed only to simplify the realizability statements. Indeed, given any model $f(\cdot;\thetabf)$ and initialization $\thetabf_0$, we can define the centered model
\[
\widetilde f(\xbf;\thetabf) := f(\xbf;\thetabf) - f(\xbf;\thetabf_0),
\]
which satisfies $\widetilde f(\xbf;\thetabf_0)=0$ for all $\xbf$ and has the same tangent features at initialization, i.e.,
$\nabla_{\thetabf}\widetilde f(\xbf;\thetabf)\big|_{\thetabf=\thetabf_0}
=
\nabla_{\thetabf} f(\xbf;\thetabf)\big|_{\thetabf=\thetabf_0}$.

\subsection{Realizability}

\begin{lemma} \label{lemma:a1}
For any $\varepsilon > 0$ and $\delta \in (0, 1)$, there exists a number of qubits $m_0$ and a QNN structure, such that for all $m \ge m_0$, with probability at least $1-\delta$ over the random initialization of ${\thetabf}_0$, we have:
$$ \abs{\hat{\Kbf}(\xbf, \xbf') - \bar{\Kbf}(\xbf, \xbf')} \le \varepsilon $$
for any pair of inputs $\xbf, \xbf' \in \mathcal{X}_{1:TK}$. 
\end{lemma}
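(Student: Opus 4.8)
The plan is to decompose the target quantity via the triangle inequality into a random ``empirical-to-analytic'' discrepancy and a deterministic ``analytic-to-limiting'' discrepancy, and to control each at resolution $\varepsilon/2$. For any pair $\xbf,\xbf'\in\mathcal{X}_{1:TK}$ I would write
$$
\abs{\hat{\Kbf}(\xbf,\xbf')-\bar{\Kbf}(\xbf,\xbf')}
\le
\abs{\hat{\Kbf}(\xbf,\xbf')-\Kbf(\xbf,\xbf')}
+
\abs{\Kbf(\xbf,\xbf')-\bar{\Kbf}(\xbf,\xbf')}.
$$
The second term is deterministic, and Assumption~\ref{ass:cvgtolimqntk} supplies an $m_1$ such that $\sup_{\xbf,\xbf'}\abs{\Kbf(\xbf,\xbf')-\bar{\Kbf}(\xbf,\xbf')}\le\varepsilon/2$ for all $m\ge m_1$.

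For the first (random) term I would apply the pointwise concentration of Theorem~\ref{thm:cvgQNTK} at resolution $\varepsilon/2$, obtaining for each fixed pair
$$
\mathbb{P}\!\left(\abs{\hat{\Kbf}(\xbf,\xbf')-\Kbf(\xbf,\xbf')}>\frac{\varepsilon}{2}\right)
\le
\exp\!\left(-\frac{c\varepsilon^2}{4}\,R(m)\right),
\qquad
R(m):=\frac{N_K(m)^2 N(m)^4}{Lm\,\abs{\mathcal{M}}^4\abs{\mathcal{N}}^2},
$$
and then taking a union bound over all $(TK)^2$ pairs. Assumption~\ref{circuitstruct}, which forces $Lm\abs{\mathcal{M}}^4\abs{\mathcal{N}}^2/N(m)^4\to 0$, together with the lower bound $N_K(m)=\bigOmega(1)$ from Lemma~4.9 of \cite{Girardi2025}, guarantees $R(m)\to\infty$. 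Hence the failure probability $(TK)^2\exp(-c\varepsilon^2 R(m)/4)$ can be made at most $\delta$ by choosing $m_2$ so large that $R(m)\ge \tfrac{4}{c\varepsilon^2}\bigl(2\log(TK)+\log(1/\delta)\bigr)$ for all $m\ge m_2$.

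Taking $m_0:=\max\{m_1,m_2\}$ closes the argument: for every $m\ge m_0$ the deterministic term is at most $\varepsilon/2$, and with probability at least $1-\delta$ the random term is simultaneously at most $\varepsilon/2$ across all pairs, so the triangle inequality yields the claimed uniform bound $\abs{\hat{\Kbf}(\xbf,\xbf')-\bar{\Kbf}(\xbf,\xbf')}\le\varepsilon$. I expect the main bookkeeping subtlety to be the union bound: the $(TK)^2$ pairs inflate the needed concentration rate by an additive $2\log(TK)$ term, so the crux is verifying through Assumption~\ref{circuitstruct} that $R(m)$ grows quickly enough to absorb both this term and $\log(1/\delta)$. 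This elementwise estimate is the first of the concentration ingredients of the proof; the sharper Frobenius-scale control in Lemma~\ref{lemma:conc_mtx_error}, and ultimately the width requirement $m=\bigOmega\bigl((TK)^3\lambda^{-2}\log((TK)^2/\delta)\bigr)$ of Theorem~\ref{thm:regret}, then follows by propagating this pointwise bound through the same $(TK)^2$-entry union bound at the matrix level.
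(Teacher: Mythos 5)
Your proposal is correct and follows essentially the same route as the paper: the triangle-inequality decomposition through the analytic QNTK $\Kbf$, the concentration bound of Theorem~\ref{thm:cvgQNTK} for the stochastic part (with Assumption~\ref{circuitstruct} ensuring the exponent diverges), and Assumption~\ref{ass:cvgtolimqntk} for the deterministic part. The only cosmetic difference is that you fold the $(TK)^2$-pair union bound into this lemma, whereas the paper proves the pointwise statement here and defers the union bound to Lemma~\ref{lemma:conc_mtx_error}; both bookkeeping choices lead to the same width requirement.
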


\begin{proof}
Fix any $\xbf,\xbf' \in \mathcal{X}_{1:TK}$, We need the expected empirical kernel as the bridge, by triangle inequality,
$$ \big|\hat{\Kbf}(\xbf, \xbf') - \bar{\Kbf}(\xbf, \xbf')\big| \le \underbrace{\big|\hat{\Kbf}(\xbf, \xbf') - \Kbf(\xbf, \xbf')\big|}_{\text{Stochastic Part}} + \underbrace{\big|\Kbf(\xbf, \xbf') - \bar{\Kbf}(\xbf, \xbf')\big|}_{\text{Deterministic Part}}. $$

First bound the Stochastic Part: For any given $\varepsilon > 0$, by Assumption \ref{circuitstruct} and Theorem \ref{thm:cvgQNTK}, there exist some constant $c$ such that \begin{equation} \label{ineq:k_conc}
    \mathbb{P}\left( \abs{\hat{\Kbf}(\xbf, \xbf') - \Kbf(\xbf, \xbf')} \ge \frac{\varepsilon}{2} \right) \le \exp\left[-c\, \varepsilon^2N_K(m)^2\frac{N(m)^4}{Lm
|\mathcal M|^4|\mathcal N|^2}\right].
\end{equation}
Here, $m$ is the number of qubits, and $\mathcal{M},\mathcal{N},N_K(m),N(m)$ are QNN structure dependent parameters and normalization factors, as defined in \cite{Girardi2025}. Note for QNN structures satisfying Assumption \ref{circuitstruct}, the   expression on the right of \eqref{ineq:k_conc} decreases with  $m$; see more details in Lemma~\ref{lemma:qnn_structure}.

We then bound the Deterministic Part. By Assumption \ref{ass:cvgtolimqntk}, we have
$$ \lim_{m\to\infty} \sup_{\xbf,\xbf' \in \mathcal{X}} \abs{\Kbf(\xbf, \xbf') - \bar{\Kbf}(\xbf, \xbf')} = 0. $$
This means that for any given $\varepsilon > 0$, there exists a number of qubits $m_2$ such that for all $m \ge m_2$:
\begin{align}
    \abs{\Kbf(\xbf, \xbf') - \bar{\Kbf}(\xbf, \xbf')} \le \frac{\varepsilon}{2}. \label{eqn:eps2}
\end{align} 
Combining \eqref{ineq:k_conc} and \eqref{eqn:eps2} yields the desired result.
\end{proof}

\noindent Next, we prove the concentration of kernel matrix in the Frobenius norm. 

\begin{lemma} \label{lemma:conc_mtx_error}
For any $\varepsilon > 0$ and $\delta \in (0, 1)$, if the QNN architecture satisfies the scaling condition
$$ \frac{N_K(m)^2 N(m)^4}{Lm |\mathcal{M}|^4 |\mathcal{N}|^2} = \bigOmega\left( \frac{1}{\varepsilon^2}\log\left(\frac{(TK)^2}{\delta}\right) \right), $$
and $$m \geq C_{\varepsilon}\quad  \text{ such that}\quad  \sup_{\xbf,\xbf' \in \mathcal{X}} \abs{\Kbf(\xbf, \xbf') - \bar{\Kbf}(\xbf, \xbf')} \leq \frac{\varepsilon }{ 2}.$$
Then with probability at least $1-\delta$ over the random initialization of ${\thetabf}_0$, we have
$$ \norm{\hat{\mathbf{K}} - \bar{\mathbf{K}}}_{\mathrm{F}} \le TK\varepsilon. $$
\end{lemma}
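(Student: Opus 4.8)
The plan is to reduce the Frobenius-norm statement to an entrywise concentration bound combined with a union bound over the $(TK)^2$ matrix entries, reusing the triangle-inequality decomposition already established in Lemma~\ref{lemma:a1}. For each index pair $(i,j)$ with $i,j\in[TK]$, I would write
\[
\abs{\hat{\Kbf}_{ij}-\bar{\Kbf}_{ij}}\le \underbrace{\abs{\hat{\Kbf}_{ij}-\Kbf_{ij}}}_{\text{stochastic}}+\underbrace{\abs{\Kbf_{ij}-\bar{\Kbf}_{ij}}}_{\text{deterministic}}.
\]
The deterministic term is controlled directly by the hypothesis $m\ge C_\varepsilon$, which guarantees $\abs{\Kbf_{ij}-\bar{\Kbf}_{ij}}\le \varepsilon/2$ uniformly over all $i,j$. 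Thus the deterministic part contributes at most $\varepsilon/2$ to every entry simultaneously, with no probabilistic cost.

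For the stochastic term I would invoke Theorem~\ref{thm:cvgQNTK} at each fixed pair $(i,j)$ with tolerance $\varepsilon/2$, which gives
\[
\mathbb{P}\!\left(\abs{\hat{\Kbf}_{ij}-\Kbf_{ij}}>\tfrac{\varepsilon}{2}\right)\le \exp\!\left(-c\,\tfrac{\varepsilon^2}{4}\,\frac{N_K(m)^2N(m)^4}{Lm\abs{\mathcal{M}}^4\abs{\mathcal{N}}^2}\right).
\]
Since all entries $\hat{\Kbf}_{ij}$ are functions of the \emph{same} random initialization $\thetabf_0$, I would not need any independence structure; a purely subadditive union bound over the $(TK)^2$ pairs suffices, yielding
\[
\mathbb{P}\!\left(\max_{i,j}\abs{\hat{\Kbf}_{ij}-\Kbf_{ij}}>\tfrac{\varepsilon}{2}\right)\le (TK)^2\exp\!\left(-c\,\tfrac{\varepsilon^2}{4}\,\frac{N_K(m)^2N(m)^4}{Lm\abs{\mathcal{M}}^4\abs{\mathcal{N}}^2}\right).
\]

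Next I would use the assumed scaling condition to push this failure probability below $\delta$. Writing $R:=N_K(m)^2N(m)^4/(Lm\abs{\mathcal{M}}^4\abs{\mathcal{N}}^2)$, the requirement $(TK)^2\exp(-c\varepsilon^2R/4)\le\delta$ is, after taking logarithms, equivalent to $R\ge \tfrac{4}{c\varepsilon^2}\log\!\big((TK)^2/\delta\big)$, which is precisely the hypothesized $\bigOmega\!\big(\varepsilon^{-2}\log((TK)^2/\delta)\big)$ scaling once the constants $c$ and $4$ are absorbed into the implicit constant. Hence, with probability at least $1-\delta$, every stochastic term is at most $\varepsilon/2$ simultaneously. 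On that event the two parts combine to give $\abs{\hat{\Kbf}_{ij}-\bar{\Kbf}_{ij}}\le\varepsilon$ for all $i,j$, and therefore
\[
\norm{\hat{\Kbf}-\bar{\Kbf}}_{\mathrm{F}}=\Big(\sum_{i,j}(\hat{\Kbf}_{ij}-\bar{\Kbf}_{ij})^2\Big)^{1/2}\le \sqrt{(TK)^2\varepsilon^2}=TK\varepsilon.
\]

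The only delicate point, and the step I would treat most carefully, is matching constants in the union bound so that the $(TK)^2$ prefactor is exactly absorbed by the $\log((TK)^2/\delta)$ appearing inside the scaling hypothesis; this is bookkeeping rather than a genuine analytic obstacle. I would emphasize that no independence among entries is required, since the deviation bound is applied per-entry and aggregated by subadditivity, and that the crude $\sqrt{(TK)^2}$ conversion from the entrywise $\ell_\infty$ bound to the Frobenius norm is what produces the factor $TK$ in the final estimate.
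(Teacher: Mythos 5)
Your proposal is correct and follows essentially the same route as the paper's own proof: a triangle-inequality split into stochastic and deterministic parts, entrywise concentration via Theorem~\ref{thm:cvgQNTK} with a union bound over the $(TK)^2$ entries at failure level $\delta/(TK)^2$, and the crude $\ell_\infty$-to-Frobenius conversion yielding the $TK$ factor. Your handling of the $\varepsilon/2$ split and the constant $4$ in the exponent is in fact slightly more careful than the paper's, but the difference is only bookkeeping absorbed into the $\Omega(\cdot)$ scaling.
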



\begin{proof}
Lemma \ref{lemma:a1} establishes that for any $\varepsilon' > 0$ and $\delta' \in (0, 1)$, if the number of qubits $m$ is sufficiently large, then
$$ \mathbb{P}\left(\abs{\hat{\mathbf{K}}_{ij} - \bar{\mathbf{K}}_{ij}} \ge \varepsilon'\right) \le \delta',$$
Here $\hat{\Kbf}_{i j}:=\hat{\Kbf}(\xbf^i,\xbf^j)$, similarly for $\bar{\Kbf}$. We want this bound to hold simultaneously for all $(TK)^2$ entries in the matrix with a total failure probability of at most $\delta$. We use the union bound: 
$$ \mathbb{P}\left(\exists (i,j) \in [TK]^2  \,:\,\abs{\hat{\mathbf{K}}_{ij} - \bar{\mathbf{K}}_{ij}} \ge \varepsilon\right) \le \sum_{i=1}^{TK}\sum_{ j=1}^{TK} \mathbb{P}\left(\abs{\hat{\mathbf{K}}_{ij} - \bar{\mathbf{K}}_{ij}} \ge \varepsilon\right). $$
Let the probability of failure for a single entry be $\delta' = \delta / (TK)^2$. From the   concentration bound (Eq.~\eqref{ineq:k_conc} in Lemma \ref{lemma:a1}), we need to satisfy:
$$ \exp \left( -c \frac{N_K(m)^2 N(m)^4}{Lm |\mathcal{M}|^4 |\mathcal{N}|^2} \varepsilon^2 \right) \le \frac{\delta}{(TK)^2}. $$
Taking logarithms and rearranging gives the following required scaling condition on the architecture:
$$ \frac{N_K(m)^2 N(m)^4}{Lm |\mathcal{M}|^4 |\mathcal{N}|^2} \ge \frac{1}{c\varepsilon^2} \log\left(\frac{(TK)^2}{\delta}\right),$$
which is precisely the first condition stated in this lemma. Together with the second condition, we have, with probability at least $1-\delta$, for all $(i,j)$:
$$ \abs{\hat{\mathbf{K}}_{ij} - \bar{\mathbf{K}}_{ij}} \le \varepsilon. $$

\noindent We can now bound the Frobenius norm of the difference
\begin{align*}
\norm{\hat{\mathbf{K}} - \bar{\mathbf{K}}}_{\mathrm{F}}^2 = \sum_{i=1}^{TK}\sum_{ j=1}^{TK}\abs{\hat{\mathbf{K}}_{ij} - \bar{\mathbf{K}}_{ij}} ^2 
\le\sum_{i=1}^{TK}\sum_{ j=1}^{TK} \varepsilon^2 
= (TK)^2 \varepsilon^2.
\end{align*}
Hence
$$ \norm{\hat{\mathbf{K}} - \bar{\mathbf{K}}}_{\mathrm{F}} \le TK\varepsilon, $$ as desired.
\end{proof}

\begin{lemma} \label{lemma:qnn_structure}
The assumption in Lemma \ref{lemma:conc_mtx_error} can be achieved with some QNN structure, with $m = \bigOmega\left( \frac{1}{\varepsilon^2}\log\left(\frac{(TK)^2}{\delta}\right) \right)$.
\end{lemma}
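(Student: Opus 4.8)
The plan is to exhibit one concrete family of architectures for which the two hypotheses of Lemma~\ref{lemma:conc_mtx_error} collapse into the single requirement $m = \bigOmega\left(\frac{1}{\varepsilon^2}\log\left(\frac{(TK)^2}{\delta}\right)\right)$. The natural candidate is the class of constant-depth, geometrically local circuits already discussed after Theorem~\ref{thm:cvgQNTK}, whose light cones do not grow with the system size.

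First I would fix the architecture-dependent scalings. For a constant-depth geometrically local ansatz, $L = \Theta(1)$, and because neighbour-only interactions keep the past and future light cones bounded, $|\mathcal{M}| = \Theta(1)$ and $|\mathcal{N}| = \Theta(1)$. Choosing the normalization $N(m) = \Theta(\sqrt{m})$ (as for the circuit of Fig.~\ref{fig:circuit}) gives $N(m)^4 = \Theta(m^2)$, and the bound $\Omega(1) \le N_K(m) < O(|\mathcal{N}|)$ recalled from \cite{Girardi2025} pins $N_K(m) = \Theta(1)$ since $|\mathcal{N}| = \Theta(1)$.

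Substituting these into the left-hand side of the stochastic scaling condition gives
$$\frac{N_K(m)^2 N(m)^4}{Lm |\mathcal{M}|^4 |\mathcal{N}|^2} = \frac{\Theta(1)\cdot\Theta(m^2)}{\Theta(1)\cdot m\cdot\Theta(1)\cdot\Theta(1)} = \Theta(m),$$
so that the requirement $\frac{N_K(m)^2 N(m)^4}{Lm|\mathcal{M}|^4|\mathcal{N}|^2} = \bigOmega\left(\frac{1}{\varepsilon^2}\log\left(\frac{(TK)^2}{\delta}\right)\right)$ becomes exactly $m = \bigOmega\left(\frac{1}{\varepsilon^2}\log\left(\frac{(TK)^2}{\delta}\right)\right)$, the claimed scaling. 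Note that $p \approx Lm = \Theta(m)$, so this $m$-scaling transfers directly to the parameter count.

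It remains to absorb the deterministic hypothesis $m \ge C_\varepsilon$ with $\sup_{\xbf,\xbf'}|\Kbf(\xbf,\xbf') - \bar{\Kbf}(\xbf,\xbf')| \le \varepsilon/2$. By Assumption~\ref{ass:cvgtolimqntk} the analytic QNTK converges uniformly to $\bar{\Kbf}$, so such a threshold $C_\varepsilon$ exists and depends only on $\varepsilon$ and the fixed architecture, not on $T$, $K$, or $\delta$. I expect this reconciliation to be the main (and only) delicate point, since Assumption~\ref{ass:cvgtolimqntk} supplies convergence but no explicit rate: one must argue that $C_\varepsilon$ is of lower order than, or at worst comparable to, the stochastic bound. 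As $\log\left(\frac{(TK)^2}{\delta}\right) \ge 1$, the stochastic requirement is already $\bigOmega(1/\varepsilon^2)$, so taking $m$ to be the maximum of the two thresholds leaves the order unchanged at $m = \bigOmega\left(\frac{1}{\varepsilon^2}\log\left(\frac{(TK)^2}{\delta}\right)\right)$, which completes the argument.
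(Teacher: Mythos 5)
Your proof is correct and follows essentially the same route as the paper: instantiate a concrete architecture with bounded light cones and $N(m)=\Theta(\sqrt m)$, substitute the scalings into the first condition of Lemma~\ref{lemma:conc_mtx_error}, and observe that the left-hand side is $\Theta(m)$ (the paper uses the circuit of Fig.~\ref{fig:circuit} with $|\mathcal M|=O(L)$, $|\mathcal N|=O(L^2)$ and also allows $L=O(\log m)$, which only costs polylogarithmic factors). Your explicit treatment of the deterministic threshold $C_\varepsilon$ — which the paper's proof silently omits — is a welcome extra, and your observation that Assumption~\ref{ass:cvgtolimqntk} provides no rate for $C_\varepsilon$ correctly identifies the one point where the stated $m$-scaling is only as strong as that unquantified convergence.
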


\begin{proof}
    We give an example QNN architecture in Fig.~\ref{fig:circuit}. Section 2.5 in \cite{Girardi2025} established that its $N(m) = \sqrt{m}$, $|\mathcal{M}|=O(L)$, $|\mathcal{N}|=O(L^2)$, with $L=O(\log m)$ or constant (i.e., $L =O(1)$). Substituting these into the first condition in Lemma \ref{lemma:conc_mtx_error} gives the desired result.
\end{proof}

\begin{lemma} \label{lemma:realizable}
Assume the conditions for Lemma \ref{lemma:conc_mtx_error} hold. With probability at least $1-\delta$, there exists a parameter vector ${\thetabf}^* \in \R^{p}$ ($p=\mathrm{dim}( {\thetabf})$) such that for all contexts $\xbf^i \in \mathcal{X}_{1:TK}$:
\begin{enumerate}
    \item The reward function is perfectly represented by a linear model:
    $$ h(\xbf^i) = \langle \nabla_{\thetabf} f(\xbf^i;\thetabf_0), {\thetabf}^* - {\thetabf}_0 \rangle. $$
    \item The norm of the solution vector is bounded around the initial parameter $\bm{\theta}_0$:
    $$ N_K(m)\norm{{\thetabf}^* - {\thetabf}_0}_2^2 \le 2 \hbf^\top \bar{\mathbf{K}}^{-1} \hbf. $$
\end{enumerate}
Here $\hbf = [h(\xbf^1), \dots, h(\xbf^{TK})]^\top$ is the vector of true rewards.
\end{lemma}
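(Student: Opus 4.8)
The plan is to exhibit $\thetabf^*$ explicitly as the minimum-norm interpolating parameter in the tangent feature space, and then to transfer the resulting norm bound from the empirical kernel $\hat{\Kbf}$ to the limiting kernel $\bar{\Kbf}$ using the concentration guarantee of Lemma~\ref{lemma:conc_mtx_error}.

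First I would collect the tangent features into a matrix. Writing $\Phi := [\phibf(\xbf^1), \dots, \phibf(\xbf^{TK})]^\top \in \R^{TK \times p}$ with $\phibf$ the feature map of \eqref{eq:feature-map}, the defining identity $\phibf(\xbf)^\top\phibf(\xbf') = \hat{\Kbf}(\xbf,\xbf')$ gives $\Phi\Phi^\top = \hat{\Kbf}$, the empirical Gram matrix on $\mathcal X_{1:TK}$. Since $\nabla_{\thetabf} f(\xbf^i;\thetabf_0) = \sqrt{N_K(m)}\,\phibf(\xbf^i)$, the realizability requirement $h(\xbf^i) = \inp{\nabla_{\thetabf} f(\xbf^i;\thetabf_0)}{\thetabf^*-\thetabf_0}$ is exactly the linear system $\sqrt{N_K(m)}\,\Phi(\thetabf^*-\thetabf_0) = \hbf$. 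I would then define the candidate
\[
\thetabf^* := \thetabf_0 + \tfrac{1}{\sqrt{N_K(m)}}\,\Phi^\top \hat{\Kbf}^{-1}\hbf,
\]
which, whenever $\hat{\Kbf}$ is invertible, satisfies the system by the direct computation $\sqrt{N_K(m)}\,\Phi(\thetabf^*-\thetabf_0) = \Phi\Phi^\top\hat{\Kbf}^{-1}\hbf = \hbf$. Here I invoke Assumption~\ref{ass:zero_output} (so that $f(\xbf^i;\thetabf_0)=0$), which removes the affine offset in \eqref{eq:linearized_model} and makes the realizability claim take exactly the stated form, establishing part~1.

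Next I would certify invertibility and set up the norm comparison simultaneously. Choosing the tolerance small, specifically $\varepsilon \le \lambda_0/(2TK)$, Lemma~\ref{lemma:conc_mtx_error} ensures that with probability at least $1-\delta$ we have $\norm{\hat{\Kbf}-\bar{\Kbf}}_{\mathrm F} \le TK\varepsilon \le \lambda_0/2$, hence $\norm{\hat{\Kbf}-\bar{\Kbf}}_{\mathrm{op}} \le \lambda_0/2$ (the operator norm is dominated by the Frobenius norm). Combined with Assumption~\ref{ass2} ($\bar{\Kbf}\succeq\lambda_0\Ibf$), this gives $\hat{\Kbf}\succeq \bar{\Kbf}-\tfrac{\lambda_0}{2}\Ibf\succeq\tfrac{\lambda_0}{2}\Ibf\succ 0$, so $\hat{\Kbf}^{-1}$ exists. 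For the norm itself, the kernel trick yields $N_K(m)\norm{\thetabf^*-\thetabf_0}_2^2 = \hbf^\top\hat{\Kbf}^{-1}\Phi\Phi^\top\hat{\Kbf}^{-1}\hbf = \hbf^\top\hat{\Kbf}^{-1}\hbf$.

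The crux, and the main obstacle, is converting the \emph{additive} spectral closeness of $\hat{\Kbf}$ and $\bar{\Kbf}$ into a \emph{multiplicative} (relative) bound on their inverses so that the constant $2$ in the target emerges. The key step is to upgrade $\norm{\hat{\Kbf}-\bar{\Kbf}}_{\mathrm{op}}\le \lambda_0/2$ to a relative Löwner bound: since $\tfrac{\lambda_0}{2}\Ibf \preceq \tfrac12\bar{\Kbf}$ (again using $\bar{\Kbf}\succeq\lambda_0\Ibf$), we get $\hat{\Kbf}\succeq \bar{\Kbf}-\tfrac{\lambda_0}{2}\Ibf\succeq \tfrac12\bar{\Kbf}$. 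Inverting the Löwner order then gives $\hat{\Kbf}^{-1}\preceq 2\bar{\Kbf}^{-1}$, and therefore $N_K(m)\norm{\thetabf^*-\thetabf_0}_2^2 = \hbf^\top\hat{\Kbf}^{-1}\hbf \le 2\,\hbf^\top\bar{\Kbf}^{-1}\hbf$, which is precisely part~2. Since $\sqrt{N_K(m)}\,(\thetabf^*-\thetabf_0)$ is the weight vector of the linear predictor in feature space, its squared norm equals $N_K(m)\norm{\thetabf^*-\thetabf_0}_2^2 \le 2\,\hbf^\top\bar{\Kbf}^{-1}\hbf \le S^2$, connecting to the realizability constant used later in the regret analysis. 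The delicate points I would be careful about are that the $\varepsilon$-to-$\lambda_0$ coupling must be fixed before invoking the concentration lemma, and that it is the operator-norm bound (not merely the Frobenius bound) that governs the eigenvalue perturbation underlying the Löwner comparison.
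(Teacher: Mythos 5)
Your proposal is correct and follows essentially the same route as the paper's proof: your candidate $\thetabf^* = \thetabf_0 + \tfrac{1}{\sqrt{N_K(m)}}\Phi^\top\hat{\Kbf}^{-1}\hbf$ is algebraically identical to the paper's $\thetabf^*=\thetabf_0+\Jbf(\Jbf^\top\Jbf)^{-1}\hbf$, and you use the same $\varepsilon=\lambda_0/(2TK)$ concentration step followed by the same L\"owner comparison $\hat{\Kbf}\succeq\tfrac12\bar{\Kbf}\Rightarrow\hat{\Kbf}^{-1}\preceq 2\bar{\Kbf}^{-1}$ to obtain the factor of $2$.
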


\begin{proof}
By Lemma \ref{lemma:conc_mtx_error} and a union bound, choosing $\varepsilon'=\lambda_0/(2TK)$ ensures that, for a sufficiently large $m$ and with probability at least $1-\delta$,
\[
\|\hat{\mathbf{K}}-\bar{\mathbf{K}}\|_{\mathrm{F}}  \;\le\; TK\,\varepsilon' \;=\; \frac{\lambda_0}{2}.
\]
Hence, 
\[
\hat{\mathbf{K}} \;\succeq\; \bar{\mathbf{K}} - \|\hat{\mathbf{K}}-\bar{\mathbf{K}}\|_2\,\Ibf
\;\succeq\; \bar{\mathbf{K}} - \|\hat{\mathbf{K}}-\bar{\mathbf{K}}\|_{\mathrm{F}}\,\Ibf
\;\succeq\; \bar{\mathbf{K}} - \frac{\lambda_0}{2}\Ibf
\;\succeq\; \frac{\lambda_0}{2}\Ibf \;\succ\; 0,
\]
the second inequality is by $\|\mathbf{A}\|_2\le\|\mathbf{A}\|_{\mathrm{F}}$ and fourth inequality by $\bar{\mathbf{K}}\succeq \lambda_0 \Ibf$.
Therefore, $\hat{\mathbf{K}}$ is positive definite.

Define $\Jbf=\Jbf_0=\Jbf({\thetabf}_0)\in\R^{p\times TK}$ be the Jacobian at initialization (its columns are the gradient vectors $\nabla_{\thetabf} f(\xbf^i;{\thetabf}_0)$). By definition,
\[
\hat{\mathbf{K}} = \frac{1}{N_K(m)}\,\Jbf^\top \Jbf.
\]
Since $\hat{\mathbf{K}}\succ 0$, the matrix $\Jbf^\top \Jbf$ is invertible and $\Jbf$ has full rank. Define
\[
{\thetabf}^* = {\thetabf}_0+\Jbf\,(\Jbf^\top \Jbf)^{-1} \hbf.
\]
Then
\[
\Jbf^\top({\thetabf}^*-{\thetabf}_0) = \Jbf^\top \Jbf\,(\Jbf^\top \Jbf)^{-1} \hbf \;=\; \hbf,
\]
i.e., for each $i$, $\langle \nabla_{\thetabf} f(\xbf^i;{\thetabf}_0),\,{\thetabf}^*-{\thetabf}_0\rangle = h(\xbf^i)$, proving part~1 of the lemma.

For part 2 of the lemma, note by construction,
\[
\|{\thetabf}^*-{\thetabf}_0\|_2^2
\;=\; \hbf^\top (\Jbf^\top \Jbf)^{-1} \hbf
\;=\; \frac{1}{N_K(m)}\, \hbf^\top \hat{\mathbf{K}}^{-1} \hbf.
\]
Recall $\hat{\mathbf{K}} \succeq \bar{\Kbf}-\frac{\lambda_0}{2} \Ibf \succeq \frac12 \bar{\Kbf}$, so $\hat{\mathbf{K}}^{-1} \preceq 2\,\bar{\mathbf{K}}^{-1}$. Hence
\[
\|{\thetabf}^*-{\thetabf}_0\|_2^2 \;\le\; \frac{1}{N_K(m)}\, \hbf^\top (2\,\bar{\mathbf{K}}^{-1}) \hbf
\;=\; \frac{2}{N_K(m)}\, \hbf^\top \bar{\mathbf{K}}^{-1} \hbf,
\]
\end{proof}

\subsection{Confidence bounds and instantaneous regret}

\begin{lemma}\label{lem:selfnorm}
Fix $\lambda>0$ and $\delta\in(0,1)$. We use notations from Algorithm \ref{alg:qntk-ucb}. 
With probability at least $1-\delta$, for all $t$,
\begin{equation}
\label{eq:conf}
\norm{\sqrt{N_K(m)}({\thetabf}^* - {\thetabf}_0)-(\hat{\thetabf}_t-\thetabf_0)}_{\Zbf_t}  \leq \beta_t
\quad\mbox{where}\quad
\beta_{t} = \nu\sqrt{\log\frac{\det(\Zbf_{t})}{\det(\lambda \Ibf)} + 2\log\Big(\frac{1}{\delta}\Big)}
+\sqrt{\lambda}S,
\end{equation}
where $\Zbf_t = \lambda \Ibf + \sum^t_{s=1} \phibf(\xbf_{s,a_s})\phibf(\xbf_{s,a_s})^\top$ and $S\geq \sqrt{ 2\hbf^\top \bar{\Kbf}^{-1} \hbf  }$. 
\end{lemma}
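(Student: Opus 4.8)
The plan is to reduce the statement to the standard self-normalized confidence bound for regularized least squares, after reparametrizing the problem into an exactly linear one in the QNTK feature space. First I would work on the realizability event of Lemma~\ref{lemma:realizable}, so that there is a $\thetabf^*$ with $h(\xbf^i)=\inp{\nabla_{\thetabf} f(\xbf^i;\thetabf_0)}{\thetabf^*-\thetabf_0}$ for every context in $\mathcal X_{1:TK}$ and $N_K(m)\norm{\thetabf^*-\thetabf_0}_2^2\le 2\hbf^\top\bar{\Kbf}^{-1}\hbf$. Introducing the scaled parameter $\tilde{\thetabf}:=\sqrt{N_K(m)}(\thetabf^*-\thetabf_0)$ and using $\nabla_{\thetabf} f(\xbf;\thetabf_0)=\sqrt{N_K(m)}\,\phibf(\xbf)$, the mean reward becomes exactly linear in the feature map, $h(\xbf_{s,a_s})=\phibf(\xbf_{s,a_s})^\top\tilde{\thetabf}$, while part~2 of Lemma~\ref{lemma:realizable} together with $S\ge\sqrt{2\hbf^\top\bar{\Kbf}^{-1}\hbf}$ gives $\norm{\tilde{\thetabf}}_2\le S$. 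The observed rewards then obey the standard linear model $r_{s,a_s}=\phibf_s^\top\tilde{\thetabf}+\xi_s$ with $\phibf_s:=\phibf(\xbf_{s,a_s})$, and since the left-hand side of the target is precisely $\norm{\tilde{\thetabf}-(\hat{\thetabf}_t-\thetabf_0)}_{\Zbf_t}$, this reparametrization aligns the claim exactly with the classical bound.

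Next I would derive the error decomposition. Writing the ridge estimate as $\hat{\thetabf}_t-\thetabf_0=\Zbf_t^{-1}\sum_{s=1}^{t}r_{s,a_s}\phibf_s$, substituting $r_{s,a_s}=\phibf_s^\top\tilde{\thetabf}+\xi_s$, and using $\sum_{s=1}^{t}\phibf_s\phibf_s^\top=\Zbf_t-\lambda\Ibf$, a short manipulation gives
\begin{equation*}
\tilde{\thetabf}-(\hat{\thetabf}_t-\thetabf_0)=\lambda\,\Zbf_t^{-1}\tilde{\thetabf}-\Zbf_t^{-1}\sum_{s=1}^{t}\phibf_s\xi_s.
\end{equation*}
Taking the $\Zbf_t$-norm and applying the triangle inequality splits the error into a bias term and a noise term. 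For the bias term, $\lambda\norm{\Zbf_t^{-1}\tilde{\thetabf}}_{\Zbf_t}=\lambda\sqrt{\tilde{\thetabf}^\top\Zbf_t^{-1}\tilde{\thetabf}}\le\sqrt{\lambda}\,\norm{\tilde{\thetabf}}_2\le\sqrt{\lambda}S$, where the first inequality uses $\Zbf_t^{-1}\preceq\lambda^{-1}\Ibf$ (a consequence of $\Zbf_t\succeq\lambda\Ibf$). For the noise term, the identity $\norm{\Zbf_t^{-1}v}_{\Zbf_t}=\norm{v}_{\Zbf_t^{-1}}$ reduces it to $\norm{\sum_{s=1}^{t}\phibf_s\xi_s}_{\Zbf_t^{-1}}$, which is the self-normalized martingale quantity.

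Finally I would invoke the self-normalized tail inequality for vector-valued martingales. I would verify its two hypotheses: each $\phibf_s$ is measurable with respect to the history available before the reward noise $\xi_s$ is revealed (the arm $a_s$ is selected from $\Zbf_{t-1},\bbf_{t-1}$ and the round-$s$ contexts), so $\{\phibf_s\}$ is predictable; and $\xi_s$ is conditionally $\nu$-sub-Gaussian given $\mathcal H_{s-1}$ by the reward assumption in~\eqref{eq:reward}. The inequality then yields, with probability at least $1-\delta$ and uniformly over all $t$,
\begin{equation*}
\norm{\sum_{s=1}^{t}\phibf_s\xi_s}_{\Zbf_t^{-1}}\le\nu\sqrt{\log\frac{\det(\Zbf_t)}{\det(\lambda\Ibf)}+2\log\Big(\tfrac{1}{\delta}\Big)},
\end{equation*}
and adding the bias bound $\sqrt{\lambda}S$ reconstructs $\beta_t$. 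The main point requiring care is the bookkeeping of the two independent sources of randomness: the realizability event depends only on the initialization $\thetabf_0$ (supplied by Lemma~\ref{lemma:realizable}), whereas the self-normalized inequality is a statement over the reward noise conditioned on the now-fixed feature map, so I would state the conclusion on the intersection of these events and defer the allocation of the overall confidence budget to the main theorem. The only genuinely delicate step is confirming the filtration/predictability structure so that $\sum_{s}\phibf_s\xi_s$ is a bona fide vector-valued martingale; once that is in place, the remainder is the standard regularized-least-squares argument and carries no drift correction, precisely because the frozen feature map $\phibf$ is fixed throughout.
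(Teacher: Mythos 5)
Your proposal is correct and follows essentially the same route as the paper: both reduce the statement to the exact linear model supplied by Lemma~\ref{lemma:realizable} (with the rescaled parameter $\sqrt{N_K(m)}(\thetabf^*-\thetabf_0)$ and the norm bound justifying $S$) and then apply the self-normalized confidence-ellipsoid bound of Abbasi-Yadkori et al. The only difference is that you unpack the bias/noise decomposition and predictability checks that the paper invokes as a black box via \cite[Theorem 2]{linucb2011}.
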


\begin{proof}
Since we have proved in Lemma \ref{lemma:realizable} that the reward function is perfectly represented by a linear model, i.e., 
    $$ h(\xbf^i) = \langle \nabla_{\thetabf} f(\xbf^i;\thetabf_0)/\sqrt{N_K(m)}, \sqrt{N_K(m)}({\thetabf}^* - {\thetabf}_0) \rangle = \langle \phibf(\xbf^i), \sqrt{N_K(m)}({\thetabf}^* - {\thetabf}_0) \rangle, $$
 a direct application of the self-normalized martingale inequality for linear bandits
(cf. \cite[Theorem 2]{linucb2011}) yields that $$
\norm{\sqrt{N_K(m)}({\thetabf}^* - {\thetabf}_0)-\Zbf^{-1}_t\bbf_t}_{\Zbf_t}
=
\norm{\sqrt{N_K(m)}({\thetabf}^* - {\thetabf}_0)-(\hat{\thetabf}_t-\thetabf_0)}_{\Zbf_t}  \leq \beta_t
$$
which is the bound in~\eqref{eq:conf}.
Furthermore, the norm of the solution vector ${\thetabf}^*$ is bounded. More precisely, 
$$ N_K(m)\norm{{\thetabf}^* - {\thetabf}_0}_2^2 \le 2\, \hbf^\top \bar{\mathbf{K}}^{-1} \hbf, $$
which justifies the condition on $S$ in the lemma statement.
\end{proof}

\begin{lemma}\label{lem:inst}
Suppose the
confidence event in~\eqref{eq:conf} holds at round $t$, i.e.,
\[
\norm{\sqrt{N_K(m)}({\thetabf}^* - {\thetabf}_0)-(\hat{\thetabf}_{t-1}-\thetabf_0)}_{\Zbf_{t-1}}  \leq \beta_{t-1},
\]
furthermore, assume that $\lambda \geq \max\{1,S^{-2}\}$. Then define $\widetilde{r}_t=h(\xbf_{t,a_t^*})-h(\xbf_{t,a_t})$,
\[
\widetilde{r}_t \le 2\,\beta_{t-1}\min\bigl\{\|\phibf(\xbf_{t,a_t})\|_{\Zbf_{t-1}^{-1}},\,1\bigr\}.
\]
\end{lemma}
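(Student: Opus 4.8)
The plan is to follow the standard optimism-based decomposition of instantaneous regret for linear/kernelized UCB, adapted to the frozen tangent feature map $\phibf$. The cornerstone is a pointwise confidence bound that I would establish for \emph{every} arm $a$, controlling the gap between the true reward $h(\xbf_{t,a})$ and the ridge prediction $\hat h_{t-1}(\xbf_{t,a})=\phibf(\xbf_{t,a})^\top(\hat\thetabf_{t-1}-\thetabf_0)$. Writing $h(\xbf_{t,a})=\phibf(\xbf_{t,a})^\top\sqrt{N_K(m)}(\thetabf^*-\thetabf_0)$ via the linear realizability of Lemma~\ref{lemma:realizable}, the prediction error equals $\phibf(\xbf_{t,a})^\top\bigl[\sqrt{N_K(m)}(\thetabf^*-\thetabf_0)-(\hat\thetabf_{t-1}-\thetabf_0)\bigr]$. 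I would then apply the Cauchy--Schwarz inequality in the $\Zbf_{t-1}$-geometry, $|\phibf^\top \mathbf v|\le \norm{\phibf}_{\Zbf_{t-1}^{-1}}\norm{\mathbf v}_{\Zbf_{t-1}}$, and invoke the assumed confidence event to bound the second factor by $\beta_{t-1}$. This gives $|h(\xbf_{t,a})-\hat h_{t-1}(\xbf_{t,a})|\le \beta_{t-1}\norm{\phibf(\xbf_{t,a})}_{\Zbf_{t-1}^{-1}}=w_{t,a}$, so that $h(\xbf_{t,a})\le \hat h_{t-1}(\xbf_{t,a})+w_{t,a}=U_{t,a}$, the UCB index computed in Algorithm~\ref{alg:qntk-ucb}.

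With the pointwise band in hand, I would chain three inequalities. Optimism yields $h(\xbf_{t,a_t^*})\le U_{t,a_t^*}$; the greedy rule $a_t=\arg\max_a U_{t,a}$ yields $U_{t,a_t^*}\le U_{t,a_t}$; and the lower side of the band at the played arm yields $U_{t,a_t}=\hat h_{t-1}(\xbf_{t,a_t})+w_{t,a_t}\le h(\xbf_{t,a_t})+2w_{t,a_t}$. Combining, $\widetilde r_t=h(\xbf_{t,a_t^*})-h(\xbf_{t,a_t})\le 2w_{t,a_t}=2\beta_{t-1}\norm{\phibf(\xbf_{t,a_t})}_{\Zbf_{t-1}^{-1}}$, which is the first branch of the claimed minimum.

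For the trivial clip at $1$ I would use the boundedness assumption $h:\R^d\to[0,1]$, so that $\widetilde r_t\le 1$ holds deterministically, together with the hypothesis $\lambda\ge\max\{1,S^{-2}\}$. The latter forces $\sqrt{\lambda}S\ge 1$, whence $\beta_{t-1}\ge\sqrt{\lambda}S\ge 1$ from the definition of the exploration radius, giving $\widetilde r_t\le 1\le 2\beta_{t-1}$. Taking the minimum of the two upper bounds $2\beta_{t-1}\norm{\phibf(\xbf_{t,a_t})}_{\Zbf_{t-1}^{-1}}$ and $2\beta_{t-1}$ then delivers $\widetilde r_t\le 2\beta_{t-1}\min\{\norm{\phibf(\xbf_{t,a_t})}_{\Zbf_{t-1}^{-1}},1\}$.

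I do not expect a genuine obstacle here: every ingredient is either elementary or supplied by earlier results, and the only points requiring care are bookkeeping ones. First, one must treat the rescaled quantity $\sqrt{N_K(m)}(\thetabf^*-\thetabf_0)$ as the ``true'' linear coefficient, so that $\phibf$, the realizability identity, and the confidence event in Lemma~\ref{lem:selfnorm} all remain consistent. Second, the comparability of $h$ and $\hat h_{t-1}$ relies on the centering convention $f(\xbf;\thetabf_0)=0$ from Assumption~\ref{ass:zero_output}, and the clip step relies on verifying $\beta_{t-1}\ge 1$ from the stated condition $\lambda\ge S^{-2}$; these are the two places I would check carefully.
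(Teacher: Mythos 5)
Your proposal is correct and follows essentially the same route as the paper's proof: the pointwise confidence band via Cauchy--Schwarz in the $\Zbf_{t-1}$-geometry, the optimism/greedy/lower-band chain giving $\widetilde r_t\le 2\beta_{t-1}\norm{\phibf(\xbf_{t,a_t})}_{\Zbf_{t-1}^{-1}}$, and the clip at $1$ via boundedness of $h$ together with $\beta_{t-1}\ge\sqrt{\lambda}S\ge 1$. The bookkeeping points you flag (the $\sqrt{N_K(m)}$ rescaling and the centering $f(\xbf;\thetabf_0)=0$) are exactly the ones the paper's argument relies on.
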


\begin{proof}
Define for any context $\xbf$ the (time $t$) predicted (posterior) mean and variance
\[
\hat{\mu}_{t}(\xbf)\;:=\;\phibf(\xbf)^\top (\hat{\bm{\theta}}_{t-1} - \thetabf_0),
\qquad
s_{t}(\xbf)\;:=\;\|\phibf(\xbf)\|_{\Zbf_{t-1}^{-1}},
\]
and the corresponding optimistic and pessimistic indices
\[
U_{t}(\xbf)\;:=\;\hat{\mu}_{t}(\xbf)+\beta_{t-1}s_{t}(\xbf),
\qquad
L_{t}(\xbf)\;:=\;\hat{\mu}_{t}(\xbf)-\beta_{t-1}s_{t}(\xbf).
\]
On the event \eqref{eq:conf}, the Cauchy–Schwarz inequality in the
$\Zbf_{t-1}$-norm gives, for every $\xbf$,
\begin{align*}
\bigl| h(\xbf)-\hat{\mu}_{t}(\xbf)\bigr|
&=\bigl|\phibf(\xbf)^\top(\sqrt{N_K(m)}(\bm{\theta}^*- \thetabf_0)-(\hat{\bm{\theta}}_{t-1}-\thetabf_0))\bigr| \\
&\le \|\phibf(\xbf)\|_{\Zbf_{t-1}^{-1}}\;\|\sqrt{N_K(m)}(\bm{\theta}^*- \thetabf_0)-(\hat{\bm{\theta}}_{t-1}-\thetabf_0)\|_{\Zbf_{t-1}} \\
&\le \beta_{t-1}s_{t}(\xbf),
\end{align*}
hence
\begin{equation}\label{eq:ci-band}
L_{t}(\xbf)\;\le\; h(\xbf)\;\le\; U_{t}(\xbf)\qquad \text{for all $\xbf$.}
\end{equation}
Let $a_t\in\arg\max_{a\in[K]}U_{t}(\xbf_{t,a})$ be the action chosen by the algorithm, and let
$a_t^*\in\arg\max_{a\in[K]}\,h(\xbf_{t,a})$ be an optimal action. By the optimism principle and \eqref{eq:ci-band},
\[
h(\xbf_{t,a_t^*})
\;\le\; U_{t}(\xbf_{t,a_t^*})
\;\le\; U_{t}(\xbf_{t,a_t}),
\qquad
L_{t}(\xbf_{t,a_t})
\;\le\; h(\xbf_{t,a_t}).
\]
Subtracting the rightmost inequality from the leftmost inequality yields
\[
\widetilde{r}_t
= h(\xbf_{t,a_t^*})-h(\xbf_{t,a_t})
\;\le\; U_{t}(\xbf_{t,a_t}) - L_{t}(\xbf_{t,a_t})
= 2\,\beta_{t-1}\,s_{t}(\xbf_{t,a_t})
= 2\,\beta_{t-1}\,\|\phibf(\xbf_{t,a_t})\|_{\Zbf_{t-1}^{-1}}.
\]
This proves  that $\widetilde{r}_t\le 2\beta_{t-1}\|\phibf(\xbf_{t,a_t})\|_{\Zbf_{t-1}^{-1}}$.

Also, since rewards are bounded in $[0,1]$, we also have $\widetilde{r}_t\le 1$. Combining the two bounds gives
\[
\widetilde{r}_t \;\le\; \min\bigl\{\,2\beta_{t-1}\|\phibf(\xbf_{t,a_t})\|_{\Zbf_{t-1}^{-1}},1\bigr\}.
\]
Finally, we obtain the stated result since $\beta_{t-1} \geq \sqrt{\lambda}S\geq 1$.
\end{proof}

The following is similar to  the elliptical potential lemma~\cite{linucb2011}.
\begin{lemma}
\label{lem:elliptical}
The following holds. 
\[
\sum_{t=1}^T \min\!\left\{ \norm{\phibf(\xbf_{t,a_t})}_{\Zbf_{t-1}^{-1}}^2,\,1 \right\}
\;\le\; 2\,\log\frac{\det(\Zbf_T)}{\det(\lambda \Ibf)}.
\]
\end{lemma}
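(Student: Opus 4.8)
The plan is to convert the truncated sum of predictive variances into a telescoping product of determinants, and then dominate each summand by a logarithm. First I would exploit the rank-one structure of the update rule. Since $\Zbf_t = \Zbf_{t-1} + \phibf(\xbf_{t,a_t})\phibf(\xbf_{t,a_t})^\top$ and $\Zbf_{t-1} \succeq \lambda\Ibf \succ 0$ is invertible, the matrix determinant lemma for rank-one updates gives
$$\det(\Zbf_t) = \det(\Zbf_{t-1})\bigl(1 + \norm{\phibf(\xbf_{t,a_t})}_{\Zbf_{t-1}^{-1}}^2\bigr).$$
Dividing by $\det(\Zbf_{t-1})$, telescoping over $t=1,\dots,T$, and taking logarithms then yields the exact identity
$$\log\frac{\det(\Zbf_T)}{\det(\lambda\Ibf)} = \sum_{t=1}^T \log\bigl(1 + \norm{\phibf(\xbf_{t,a_t})}_{\Zbf_{t-1}^{-1}}^2\bigr).$$

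Next I would pass from this identity to the claimed inequality through an elementary scalar bound, which I would state as a short standalone claim: for every $x \ge 0$ one has $\min\{x,1\} \le 2\log(1+x)$. This is checked by splitting into $x \in [0,1]$, where $2\log(1+x)-x$ vanishes at $0$ and has nonnegative derivative $(1-x)/(1+x)$, and $x > 1$, where $\log(1+x) > \tfrac12$. Applying this with $x = \norm{\phibf(\xbf_{t,a_t})}_{\Zbf_{t-1}^{-1}}^2$ and summing over $t$ gives
$$\sum_{t=1}^T \min\bigl\{\norm{\phibf(\xbf_{t,a_t})}_{\Zbf_{t-1}^{-1}}^2, 1\bigr\} \le 2\sum_{t=1}^T \log\bigl(1 + \norm{\phibf(\xbf_{t,a_t})}_{\Zbf_{t-1}^{-1}}^2\bigr),$$
and combining with the determinant identity above closes the argument.

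This is the standard elliptical potential lemma specialized to the QNTK feature map, so I do not expect a genuine obstacle; the two points requiring care are (i) the well-posedness of each $\Zbf_{t-1}^{-1}$-norm and of the determinant ratio, which follows immediately from $\Zbf_{t-1} \succeq \lambda\Ibf \succ 0$ with $\lambda > 0$, and (ii) the role of the truncation by $\min\{\cdot,1\}$, which is precisely what makes the logarithmic bound hold uniformly rather than only in the small-variance regime. Isolating the scalar inequality keeps the central chain of (in)equalities clean and makes the logic transparent.
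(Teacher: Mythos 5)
Your proof is correct and follows essentially the same route as the paper's: the rank-one determinant identity, telescoping to $\log\det(\Zbf_T)/\det(\lambda\Ibf)=\sum_t\log(1+\norm{\phibf(\xbf_{t,a_t})}_{\Zbf_{t-1}^{-1}}^2)$, and the scalar bound $\min\{x,1\}\le 2\log(1+x)$. Your case analysis of the scalar inequality (in particular checking $\log(1+x)>\tfrac12$ for $x>1$) is in fact slightly more complete than the paper's one-line justification.
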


\begin{proof}
We follow the proof of   \cite[Lemma 11]{linucb2011}.

First, elementary matrix identities give,
\begin{align*}
    \det(\Zbf_t) &=\det(\Zbf_{t-1}+\phibf(\xbf_{t,a_t})\phibf(\xbf_{t,a_t})^\top) \\
&=\det(\Zbf_{t-1}) \bigl(1+\phibf(\xbf_{t,a_t})^\top \Zbf_{t-1}^{-1}\phibf(\xbf_{t,a_t}) \bigr)\\
&=\det(\Zbf_{t-1}) \bigl(1+\|\phibf(\xbf_{t,a_t})\|_{\Zbf_{t-1}^{-1}}^2\bigr) \\
&= \det(\lambda \Ibf) \prod^t_{s=1} \bigl(1+\|\phibf(\xbf_{s,a_s})\|_{\Zbf_{s-1}^{-1}}^2\bigr)
\end{align*}
Telescoping over $t=1,\dots,T$ and taking logs gives
\begin{equation}\label{eq:logdet-sum}
\log\frac{\det(\Zbf_T)}{\det(\lambda \Ibf)}
=\sum_{t=1}^T \log\bigl(1+\|\phibf(\xbf_{t,a_t})\|_{\Zbf_{t-1}^{-1}}^2\bigr).
\end{equation}
Note for all $x \in [0,1]$, $x\leq 2\log(1+x)$. Hence we have $\min\{x,1\}\ \le\ 2\log(1+x)$.
Hence 
\[
\sum_{t=1}^T \min\bigl\{\|\phibf(\xbf_{t,a_t})\|_{\Zbf_{t-1}^{-1}}^2,1\bigr\}
\ \le\ 2\sum_{t=1}^T \log\bigl(1+\|\phibf(\xbf_{t,a_t})\|_{\Zbf_{t-1}^{-1}}^2\bigr)
=
2\log\frac{\det(\Zbf_T)}{\det(\lambda \Ibf)}.
\]
This proves the lemma.
\end{proof}

\begin{lemma}\label{lem:logdet-qntk}
Let $\hat{\mathbf K}\in\mathbb{R}^{TK\times TK}$ be the empirical QNTK Gram matrix over all contexts
$\mathcal{X}_{1:TK}=\{\xbf_{t,a}\}_{t\in[T],a\in[K]}$, and let $\bar{\mathbf K}$ be the limiting QNTK Gram matrix  on the same set.
Define the spectral mismatch
$\bm{\Delta} := 
\hat{\mathbf K}-\bar{\mathbf K}
$.
Then
\begin{equation}\label{eq:logdet-main}
\log\frac{\det(\Zbf_T)}{\det(\lambda \Ibf)}
\;\le\; \log\det\Big(\Ibf+\frac{1}{\lambda}\,\bar{\mathbf K}\Big)
\;+\; \frac{\sqrt{TK}}{\lambda}\,\|\bm{\Delta}\|_F.
\end{equation}
Consequently, invoking the definition of the quantum effective dimension
\[
\widetilde{d}_{\mathrm{q}}:= \frac{\log\det\!\big(\Ibf+\bar{\mathbf K}/\lambda\big)}{\log\!\big(1+TK/\lambda\big)},
\]
we have 
\[
\log\frac{\det(\Zbf_T)}{\det(\lambda \Ibf)}
\;\le\; \widetilde{d}_{\mathrm{q}}\,\log\!\bigg(1+\frac{TK}{\lambda}\bigg)
\;+\; \frac{\sqrt{TK}}{\lambda}\,\|\bm{\Delta}\|_F.
\]
\end{lemma}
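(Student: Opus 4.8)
The plan is to prove \eqref{eq:logdet-main} in two stages---first reducing the design-matrix log-determinant ratio to the log-determinant of the full $TK\times TK$ empirical kernel $\hat{\mathbf K}$, and then transferring from $\hat{\mathbf K}$ to the limiting kernel $\bar{\mathbf K}$ via a first-order perturbation bound---after which the stated corollary is an immediate substitution.

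For the first stage, collect the played feature vectors $\phibf(\xbf_{s,a_s})$, $s\in[T]$, as the columns of $\Phi_T\in\mathbb R^{p\times T}$, so that $\Zbf_T=\lambda\Ibf_p+\Phi_T\Phi_T^\top$, and collect all $TK$ feature vectors as the columns of $\Phi\in\mathbb R^{p\times TK}$, so that $\Phi^\top\Phi=\hat{\mathbf K}$ (this is exactly the identity $\phibf(\xbf)^\top\phibf(\xbf')=\hat{\mathbf K}(\xbf,\xbf')$ recorded after the feature-map definition). Since the $T$ played contexts form a sub-collection of the $TK$ contexts and each rank-one summand is positive semidefinite, $\Phi_T\Phi_T^\top\preceq\Phi\Phi^\top$, whence $0\prec\Zbf_T\preceq\lambda\Ibf_p+\Phi\Phi^\top$ and monotonicity of the determinant on the positive-definite cone gives $\det(\Zbf_T)\le\det(\lambda\Ibf_p+\Phi\Phi^\top)$. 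Sylvester's determinant identity then yields $\det(\lambda\Ibf_p+\Phi\Phi^\top)=\lambda^p\det(\Ibf_{TK}+\tfrac1\lambda\Phi^\top\Phi)=\lambda^p\det(\Ibf_{TK}+\tfrac1\lambda\hat{\mathbf K})$, and dividing by $\det(\lambda\Ibf)=\lambda^p$ produces $\log\frac{\det(\Zbf_T)}{\det(\lambda\Ibf)}\le\log\det(\Ibf+\tfrac1\lambda\hat{\mathbf K})$.

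For the second stage I would exploit the concavity of $X\mapsto\log\det X$ on the positive-definite cone. With $A=\Ibf+\tfrac1\lambda\hat{\mathbf K}$ and $B=\Ibf+\tfrac1\lambda\bar{\mathbf K}$, both $\succeq\Ibf\succ0$, the tangent-plane inequality for concave functions gives $\log\det A\le\log\det B+\operatorname{tr}\big(B^{-1}(A-B)\big)$; since $A-B=\tfrac1\lambda\bm\Delta$, the correction is $\tfrac1\lambda\operatorname{tr}(B^{-1}\bm\Delta)$. As $B^{-1}$ and $\bm\Delta$ are symmetric, the matrix Cauchy--Schwarz inequality bounds $\operatorname{tr}(B^{-1}\bm\Delta)\le\|B^{-1}\|_{\mathrm F}\,\|\bm\Delta\|_{\mathrm F}$, and $B\succeq\Ibf$ forces every eigenvalue of $B^{-1}$ into $(0,1]$, so $\|B^{-1}\|_{\mathrm F}^2=\sum_i\lambda_i(B^{-1})^2\le TK$ and $\|B^{-1}\|_{\mathrm F}\le\sqrt{TK}$. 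This gives the penalty $\frac{\sqrt{TK}}{\lambda}\|\bm\Delta\|_{\mathrm F}$ and establishes \eqref{eq:logdet-main}; the final display follows by replacing $\log\det(\Ibf+\bar{\mathbf K}/\lambda)$ with $\widetilde d_{\mathrm q}\log(1+TK/\lambda)$ per Definition~\ref{def:q-eff-dim}.

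Every ingredient is individually standard, so the care lies in the bookkeeping of the first stage: one must verify that the played contexts really are a genuine subset of $\mathcal X_{1:TK}$ (legitimizing $\Phi_T\Phi_T^\top\preceq\Phi\Phi^\top$) and keep track of the correct power $\lambda^p$ in Sylvester's identity so that it cancels against $\det(\lambda\Ibf)$. The only inexactness is the crude eigenvalue bound $\|B^{-1}\|_{\mathrm F}\le\sqrt{TK}$ in the second stage, but this is harmless because $\|\bm\Delta\|_{\mathrm F}$ is driven to zero by the concentration result of Lemma~\ref{lemma:conc_mtx_error} as $m$ grows, so the whole perturbation term is ultimately negligible in the regret bound.
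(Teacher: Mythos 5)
Your proof is correct and follows essentially the same route as the paper's: both first pass from the played contexts to all $TK$ contexts and use the Sylvester/Weinstein--Aronszajn swap to reduce $\log\det(\Zbf_T)/\det(\lambda\Ibf)$ to $\log\det(\Ibf+\hat{\mathbf K}/\lambda)$, and both then apply the concavity (tangent-plane) inequality for $\log\det$ together with the Frobenius Cauchy--Schwarz bound and $\|(\Ibf+\bar{\mathbf K}/\lambda)^{-1}\|_{\mathrm F}\le\sqrt{TK}$ to obtain the $\frac{\sqrt{TK}}{\lambda}\|\bm\Delta\|_{\mathrm F}$ correction. The only difference is notational (your $\Phi$ is the paper's $\Jbf/\sqrt{N_K(m)}$), so there is nothing substantive to add.
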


\begin{proof}
Consider,
\begin{align}
\log\frac{\det( \Zbf_T)}{\det(\lambda \Ibf)}
&= \log\det\!\Big(\Ibf + \frac{1}{\lambda}\sum_{t=1}^T  \phibf(\xbf_{t,a_t})\,\phibf(\xbf_{t,a_t})^\top\Big)\nonumber\\
&\le \log\det\!\Big(\Ibf + \frac{1}{\lambda}\sum_{i=1}^{TK} \phibf(\xbf^i) \phibf(\xbf^i)^\top\Big)\nonumber\\
&= \log\det\!\Big(\Ibf + \frac{1}{\lambda\, N_K(m)} \Jbf\Jbf^\top\Big) \\
&= \log\det\!\Big(\Ibf + \frac{1}{\lambda\, N_K(m)} \Jbf^\top \Jbf\Big). \label{eq:swap}
\end{align}

Now $\frac{1}{N_K(m)}\Jbf^\top \Jbf$ is exactly $\hat{\Kbf}$, writing
$\hat{\Kbf}=\bar{\Kbf}+\bm{\Delta}$, we have 
\begin{align*}
\log\det\!\Big(\Ibf+\frac{1}{\lambda}\hat{\Kbf}\Big)
&= \log\det\!\Big(\Ibf+\frac{1}{\lambda}(\bar{\Kbf}+\bm{\Delta})\Big)\\
&\le \log\det\!\Big(\Ibf+\frac{1}{\lambda}\bar{\Kbf}\Big)
\;+\;\Big\langle \frac{1}{\lambda}\Big(\Ibf+\frac{1}{\lambda}\bar{\Kbf}\Big)^{-1},\,\bm{\Delta}\Big\rangle \\
&\le \log\det\!\Big(\Ibf+\frac{1}{\lambda}\bar{\Kbf}\Big)
\;+\;\frac{1}{\lambda}\,\big\|\big(\Ibf+\frac{1}{\lambda}\bar{\Kbf}\big)^{-1}\big\|_{\mathrm{F}}\,\|\bm{\Delta}\|_{\mathrm{F}}  \\
&\le \log\det\!\Big(\Ibf+\frac{1}{\lambda}\bar{\Kbf}\Big)
\;+\;\frac{\sqrt{TK}}{\lambda}\,\|\bm{\Delta}\|_{\mathrm{F}},
\end{align*}
where we used the concavity of $\log\det (\cdot)$, $|\langle \mathbf A,\mathbf B\rangle|\le \|\mathbf A\|_{\mathrm{F}}\,\| \mathbf B\|_{\mathrm{F}}$ and
$\|\mathbf A\|_{\mathrm{F}} \leq \sqrt{TK}\|\mathbf A\|_2$ for any $\mathbf A \in \mathbb R^{TK \times TK}$.

So we have 
\begin{align*}
\log\frac{\det(\Zbf_T)}{\det(\lambda \Ibf)}
&\le \log\det\!\Big(\Ibf+\frac{1}{\lambda}\bar{\Kbf}\Big)
\;+\;\frac{\sqrt{TK}}{\lambda}\,\|\bm{\Delta}\|_{\mathrm{F}} \\
&\leq \widetilde{d}_{\mathrm{q}}\,\log\!\Big(1+\frac{TK}{\lambda}\Big)
\;+\; \frac{\sqrt{TK}}{\lambda}\,\|\bm{\Delta}\|_{\mathrm{F}}.
\end{align*}

\end{proof}


\subsection{Proof of Theorem \ref{thm:regret}}

\thmregret*

\begin{proof}
Consider,
\begin{align*}
    R_T &= \sum_{t=1}^T\big(h(\xbf_{t,a_t^*})-h(\xbf_{t,a_t})\big) \\
    &\leq \sum_{t=1}^T 2\beta_{t-1}\min\bigl\{\,\|\phibf(\xbf_{t,a_t})\|_{\Zbf_{t-1}^{-1}},\;1\,\bigr\} \\
    &\leq 2\beta_T \sum_{t=1}^T \min\bigl\{\,\|\phibf(\xbf_{t,a_t})\|_{\Zbf_{t-1}^{-1}},\;1\,\bigr\} \\
    &\leq 2\beta_T \sqrt{T} \sqrt{\sum_{t=1}^T \min\bigl\{\,\|\phibf(\xbf_{t,a_t})\|^2_{\Zbf_{t-1}^{-1}},\;1\,\bigr\}} \\
    &\leq 2\,\beta_T\,\sqrt{\,2T\;\log\frac{\det(\Zbf_T)}{\det(\lambda \Ibf)}\,},
\end{align*}
where the third inequality is by the Cauchy--Schwarz inequality. Recall that the spectral mismatch $\bm{\Delta} := 
\hat{\mathbf K}-\bar{\mathbf K}$.
Substituting  
\[ 
\log\frac{\det( \Zbf_T)}{\det(\lambda \Ibf)}
\;\le\; \widetilde{d}_{\mathrm{q}}\,\log\!\Big(1+\frac{TK}{\lambda}\Big) + \frac{\sqrt{TK}}{\lambda}\|\bm{\Delta}\|_{\mathrm{F}}
\]
and $$\beta_{T} \;=\; \nu\sqrt{\log\frac{\det(\Zbf_{T})}{\det(\lambda \Ibf)}+2\log\Big(\frac{1}{\delta}\Big)}
+\sqrt{\lambda}S$$ gives the bound
\begin{align*}
    R_T&\le
2 \sqrt{2T\left( \widetilde{d}_{\mathrm{q}} \log\Big(1+\frac{TK}{\lambda}\Big)+\frac{\sqrt{TK}}{\lambda}\|\bm{\Delta}\|_{\mathrm{F}} \right)} \\
&\qquad \times\Big(\nu\sqrt{\left( \widetilde{d}_{\mathrm{q}}\log\Big(1+\frac{TK}{\lambda}\Big)+\frac{\sqrt{TK}}{\lambda}\|\bm{\Delta}\|_{\mathrm{F}} \right) + 2\log\Big(\frac{1}{\delta}\Big)}+\sqrt{\lambda}S\Big).
\end{align*}
By Lemma \ref{lemma:conc_mtx_error} and \ref{lemma:qnn_structure}, $m = \bigOmega\left( \frac{T^3K^3}{\lambda^2}\log\left(\frac{(TK)^2}{\delta}\right) \right)$ gives $\|\bm{\Delta}\|_{\mathrm{F}}=
\big\|\hat{\mathbf K}-\bar{\mathbf K}\big\|_{\mathrm{F}}  \;\le\; TK\varepsilon \leq \frac{\lambda}{\sqrt{TK}}
$. This yields  the stated bound.
\end{proof}

\end{appendices}


\end{document}